\newcommand{\oomit}[1]{}
\newtheorem{theorem}{Theorem}
\newcommand \nn {\mathcal{N}}
\newcommand \bl {\mathcal{B}}
\newcommand \jac {\mathcal{J}}
\newcommand \ann {ANN\xspace}
\newcommand \bbR {\mathbb{R}}
\newenvironment{remark} {\noindent\textbf{Remark} }{}
\DeclareMathOperator*{\rank}{rank}
\DeclareMathOperator*{\vect}{Vec}
\DeclareMathOperator*{\abs}{abs}
\title{Credit Assignment for Trained Neural Networks Based on \\ Koopman Operator Theory\thanks{This work is supported by the National Natural Science Foundation of China under Grant No.61872371, No.61836005 and No.62032024 and  the CAS Pioneer Hundred Talents Program.}}
\author{
    Zhen Liang$^1$, Changyuan Zhao$^2$, Wanwei Liu$^1$,  Bai Xue$^2$,  Wenjing Yang$^1$ and Zhengbin Pang$^1$
}
\begin{document}

\maketitle

\begin{abstract}
Credit assignment problem  of neural networks refers to evaluating the credit of each network component to the final outputs.  For an untrained  neural network, approaches to tackling it have made great contributions to parameter update and model revolution during the training phase. This problem on trained neural networks receives rare attention,  nevertheless, it plays an increasingly important role in neural network patch, specification and verification. Based on Koopman operator theory, this paper presents an alternative perspective of linear dynamics  on dealing with the credit assignment problem for trained neural networks. Regarding a neural network as the composition of sub-dynamics series, we utilize step-delay embedding to capture snapshots of each component, characterizing the established mapping as exactly as possible. To circumvent the dimension-difference problem encountered during the embedding, a composition and decomposition of an auxiliary linear layer, termed minimal linear dimension alignment, is carefully designed with rigorous formal guarantee. Afterwards, each component is approximated by a Koopman operator and we derive the Jacobian matrix and its  corresponding determinant, similar to backward propagation. Then, we can define a metric with algebraic interpretability for the credit assignment of each network component. Moreover, experiments conducted on typical neural networks demonstrate the effectiveness of the proposed method.
\end{abstract}

\section{I Introduction}
\label{sec:intro}
Artificial neural networks (ANN), also known as neural networks (NN) for short,  have recently emerged as leading candidate models for deep learning (DL), popularly used in a variety of areas, such as computer vision \cite{dahnert2021panoptic, esser2021taming, tian2021image}, natural language processing \cite{karch2021grounding, wang2021grad2task, yuan2021bartscore} and so on. Behind the enormous success, {\ann}s are generally with  complicated structures, meaning that there is an intricate data flow through multiple linear or nonlinear components from an input sample to its corresponding output. Since each component works for the information processing and affects the subsequent components, there is a pressing need to evaluate how much a certain component contributes for the final output. This problem is termed Credit Assignment Problem (CAP) \cite{minsky1961steps}, which has being at the heart of the training methods of ANNs since proposed.

For an untrained neural network consisting of a large quantity of parameters,  CAP coincides with the requirement to change parameters at different levels during its training phase.  Backward Propagation algorithm (BP)  \cite{werbos1974beyond} provides a powerful and popular approach to CAP, which computes partial derivatives representing the sensitivity of the neurons in a certain layer  on the final loss and indirectly reflects the credits of different neurons to the network output.  It is widely utilized in the mainstream training methods of {\ann}s, such as  SGD \cite{sgdarticle}, RMSProp \cite{tieleman2012lecture} and Adam \cite{Kingma2015AdamAM}, promoting the prosperity of neural networks and deep learning greatly. 

Whereas, instead of training phases, the problem on trained neural networks, i.e.,  evaluating the credit of each component to the network capability,  receives little focus. As a matter of fact,  the CAP on trained ANNs is of vital significance  in neural network related research. Minimal network patch \cite{kauschke2018batchwise, goldberger2020minimal} requires to find out the minimal modification on network parameters to satisfy certain given properties which do not hold before. Furthermore,  formal specification and verification  on {\ann}s \cite{liu2020verifying, 2019Verification, vengertsev2020recurrent, liu2021algorithms} demands to do specifications or verification on the most sensitive inputs related to robustness, reachability and so on.

Visualization seems to be a candidate solution to CAP on trained {\ann}s and some researchers resort to feature maps to describe the contribution of newly designed neural network components (or, layers) \cite{han2015deep, 2017Visual,tang2018evaluation}. However, it is mainly limited in computer vision domains and lacks of formal analysis and guarantee. Meanwhile,  it is difficult for human to recognize the feature maps as the process goes in networks, let alone matching them with the learning credit, as illustrated in Fig. \ref{fig:lenet}. Consequently, it is instrumental and urgent to deal with the credit assignment problem for trained neural networks formally and rigorously. For simplicity, the following referred CAPs are all on trained neural networks.

In fact, an \ann can be regarded as a special class of nonlinear dynamical system evolved from big data and there exist mature data-driven theories of dynamics analysis, which makes it possible to tackle the credit assignment problem from a dynamics perspective. More appealingly,  Koopman operator theory could accomplish linear approximation of nonlinear systems, which greatly simplifies the analysis on the base of corresponding Koopman operators. Though the Koopman operator is a linear transformation on infinite dimensions, an approximation transformation on  finite dimensions is adopted generally, which can be obtained with the representative data-driven algorithms, such as Dynamic Mode Decomposition (DMD) \cite{kutz2016dynamic}, Extended Dynamic Mode Decomposition (EDMD) \cite{li2017extended}, Kernel Dynamic Mode Decomposition (KDMD) \cite{kawahara2016dynamic} and so on. When a nonlinear dynamical system is associated with a linear transformation, its credit analysis would be tractable with the linear algebra theory then.

In this paper, inspired by the Koopman operator theory and backward propagation process, we propose an alternative approach to resolving the credit assignment problem for trained neural networks from a linear dynamical system perspective. We treat an \ann as a nonlinear system, consisted of a series of sub-dynamics (i.e., components) to be assigned credit. With the aid of Koopman linearization of each subsystem, the established function of an \ann is approximately characterized by the composition of a linear transformation series. Then the CAP of an \ann component is reduced to the contribution of its corresponding Koopman operator to the whole transformation. Similar to the partial derivatives of BP algorithm, we leverage the concepts of Jacobian matrix/determinant to derive and define a metric with algebraic interpretation and further to assign the credit to each network component subsequently.

Main contributions of this paper are listed as follows:

\begin{itemize}
    \item Migrating traditional CAP to trained neural networks and regarding the input/output of an \ann (or, component) as the evolution series of a dynamical system, we utilize step-delay embedding to obtain a more precise representation of the component function and further a more accurate linear approximation with Koopman operator theory.
    
    \item To deal with the dimension-gap between  the input and output layers of a component,  encountered in the step-delay embedding, we present a minimal linear dimension alignment approach via composing and decomposing a linear network layer onto a network component, which provides a novel insight into the dimension difference.
    
    \item Resorting to linear transformation analysis, we derive the sensitivity of a network component to the whole network capability with Jacobian matrices and the corresponding determinants, and define a credit metric with comprehensible algebraic explanation for assigning the learning credits to network components. 
\end{itemize}

The remainder of this paper is organized as follows. We first briefly introduce related preliminaries. Then we give the technique details for solving CAP on trained {\ann}s, including block partition, minimal linear dimension alignment, Koopman approximation and credit assignment.  Subsequently, we apply the presented approach on some typical neural networks to justify its effectiveness with experimental demonstration. We summarize this paper and discuss the possible future work  at the end.

\section{II Preliminaries}
\label{sec:pre}

In this paper, we let $\bbR^{m,n}$ be the space consists of all $m$ by $n$ real matrices, and let $\bbR^{k}$ be the space constituted by real vectors with length $k$. 

Given two matrices $\bm{A}=\left(a_{i,j}\right)_{i\leq m,j\leq n}\in\bbR^{m,n}$ and $\bm{B}\in\bbR^{k,\ell}$, then the \emph{Kronecker product} of $\bm{A}$ and $\bm{B}$,
denoted as $\bm{A}\otimes\bm{B}$, is the matrix $\left(a_{i,j}\cdot \bm{B}\right)_{i\leq m, j\leq n}
\in\bbR^{m\times k, n\times\ell}$. Let $\vect(\bm{A})$ be the vector 
$$(a_{1,1}, \ldots, a_{1,n}, a_{2,1},\ldots,a_{2,n}, \ldots, a_{m,1},\ldots,a_{m,n})^{\rm T}.$$
We in what follows use $\vect^{-1}_{m,n}$ to be its inverse function which restores
a vector (with length $m\times n$) to an $m$ by $n$ matrix, and we sometimes omit the
script $m,n$ if there is no risk of ambiguity. 

Given a mapping $\bm{f}: \bbR^{m,n}\to\bbR^{k,\ell}$, we denote by $\bm{f}^*: \bbR^{mn}\to\bbR^{k\ell}$ its \emph{vectorized function}, defined as
$\bm{f}^* = \vect\circ\bm{f}\circ\vect^{-1}$. In other words, if $\bm{f}(\bm{A})=\bm{B}$
then $\bm{f}^*(\vect(\bm{A}))=\vect(\bm{B})$.

\begin{theorem}
\label{linear}
  Let $\bm{f}(\bm{X})=\bm{AXB}$, where $\bm{A}$ and $\bm{B}$ are two given matrices,
  then $\bm{f}^*(\bm{v})=(\bm{A}\otimes\bm{B}^{\rm T})\bm{v}$.
\end{theorem}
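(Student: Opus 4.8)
The plan is to prove the identity by a direct entrywise comparison of the two vectors $\vect(\bm{f}(\bm{X}))$ and $(\bm{A}\otimes\bm{B}^{\rm T})\vect(\bm{X})$. First I would fix the dimensions forced by the typing $\bm{f}\colon\bbR^{m,n}\to\bbR^{k,\ell}$, namely $\bm{A}\in\bbR^{k,m}$ and $\bm{B}\in\bbR^{n,\ell}$, so that $\bm{A}\otimes\bm{B}^{\rm T}$ is a $(k\ell)\times(mn)$ matrix and both sides live in $\bbR^{k\ell}$. The crucial bookkeeping point is that the operator $\vect$ defined above stacks \emph{rows}, so the entry $x_{p,q}$ of $\bm{X}$ occupies position $(p-1)n+q$ of $\vect(\bm{X})$, while the entry $(\bm{AXB})_{i,j}$ occupies position $(i-1)\ell+j$ of $\vect(\bm{f}(\bm{X}))$.

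Next I would expand the matrix product entrywise, $(\bm{AXB})_{i,j}=\sum_{p=1}^{m}\sum_{q=1}^{n}a_{i,p}\,x_{p,q}\,b_{q,j}$, and separately read off the entries of the Kronecker product from its block form: $\bm{A}\otimes\bm{B}^{\rm T}$ consists of $k\times m$ blocks, the $(i,p)$ block being $a_{i,p}\bm{B}^{\rm T}$, so its entry in row $(i-1)\ell+j$ and column $(p-1)n+q$ equals $a_{i,p}(\bm{B}^{\rm T})_{j,q}=a_{i,p}b_{q,j}$. Forming the matrix--vector product then gives $\big[(\bm{A}\otimes\bm{B}^{\rm T})\vect(\bm{X})\big]_{(i-1)\ell+j}=\sum_{p,q}a_{i,p}b_{q,j}\,x_{p,q}$, which coincides exactly with the $((i-1)\ell+j)$-th entry of $\vect(\bm{f}(\bm{X}))$ computed above. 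Since $i\le k$ and $j\le\ell$ are arbitrary, the two vectors agree componentwise and the identity follows.

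The main obstacle I anticipate is purely notational rather than conceptual: the row-major convention of $\vect$ here differs from the textbook column-major one, under which the analogous identity reads $(\bm{B}^{\rm T}\otimes\bm{A})$ instead of $(\bm{A}\otimes\bm{B}^{\rm T})$. Keeping the index maps $x_{p,q}\mapsto(p-1)n+q$ and $(\bm{AXB})_{i,j}\mapsto(i-1)\ell+j$ consistent with the block placement inside $\bm{A}\otimes\bm{B}^{\rm T}$ is where an error could creep in. As an independent check I would also derive the result by reduction to the standard column-major identity: stacking the rows of a matrix $\bm{M}$ equals stacking the columns of $\bm{M}^{\rm T}$, so applying the known formula to $(\bm{AXB})^{\rm T}=\bm{B}^{\rm T}\bm{X}^{\rm T}\bm{A}^{\rm T}$ should recover $(\bm{A}\otimes\bm{B}^{\rm T})$ after transposing back, confirming the orientation of the Kronecker factors.
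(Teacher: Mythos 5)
Your proof is correct. Note that the paper states this theorem as a background fact in its preliminaries and supplies no proof of its own, so there is no in-paper argument to compare against; your direct entrywise verification is precisely the routine computation the authors leave to the reader. You correctly handled the one point where an error could actually occur: the paper's $\vect$ is row-major, so the valid identity is $\vect(\bm{AXB})=(\bm{A}\otimes\bm{B}^{\rm T})\vect(\bm{X})$ rather than the column-major textbook form $(\bm{B}^{\rm T}\otimes\bm{A})\,\mathrm{vec}(\bm{X})$, and your index bookkeeping is consistent throughout --- $x_{p,q}$ sits at position $(p-1)n+q$, the $(i,p)$ block of $\bm{A}\otimes\bm{B}^{\rm T}$ is $a_{i,p}\bm{B}^{\rm T}$, so row $(i-1)\ell+j$, column $(p-1)n+q$ carries $a_{i,p}b_{q,j}$, matching the expansion $(\bm{AXB})_{i,j}=\sum_{p,q}a_{i,p}x_{p,q}b_{q,j}$. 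Your independent cross-check, reducing to the column-major identity via $\vect(\bm{M})=\mathrm{vec}\bigl(\bm{M}^{\rm T}\bigr)$ (column-stacking) applied to $(\bm{AXB})^{\rm T}=\bm{B}^{\rm T}\bm{X}^{\rm T}\bm{A}^{\rm T}$, is also sound and is a good safeguard for exactly this kind of convention-sensitive statement.
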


Koopman operator theory works on the linear approximation of nonlinear dynamics \cite{koopman1931hamiltonian, mezic2005spectral, budivsic2012applied}. For a nonlinear dynamical system
$$\bm{x}_{t+1} = \bm{f}(\bm{x}_{t}),$$
Koopman operator theory requires to construct a linear operator $\bm{K}$, so-called \emph{Koopman operator}, satisfying:
$$\bm{K} \bm{\psi}(\bm{x}_{t}) = \bm{\psi} \bm{f}(\bm{x}_{t}), $$
where $\bm{\psi}$ is called the \emph{observation function}, namely,  a set of scalar value functions defined in the state space, utilized to observe the evolution of the system over time. Generally, $\bm{K}$ is an infinite-dimensional  operator.
Practically, numerical techniques on approximating Koopman operator on finite dimensions have been developed,  among which, DMD \cite{kutz2016dynamic} is a widely used algorithm. This method simply depends on collecting evolution series $\bm{x}_i$ of the given dynamical system at time instant $t_i$, where $i\in{1,2,3,\cdots, m}$. DMD works as a regression of evolution series onto locally linear dynamics $\bm{x}_{i+1}=\bm{Kx}_{i}$, where $\bm{K}$ is chosen to minimize $\| \bm{x}_{i+1}-\bm{K}\bm{x}_i\|_2,$ $i\in \{1,2,3,\cdots, m-1\}$. In DMD, the observation function is not explicitly listed, i.e., identity mapping actually works as the  observation function.

\begin{theorem}
Given a matrix $\bm{M}\in \mathbb{R}^{m,n}$, there exists a unique matrix $\bm{X}\in \bbR^{n,m}$ fulfilling the following requirements
\begin{equation}\tag{$\dagger$} \label{PMI}
    \begin{array}{cc}
    \bm{MXM=M}  & \bm{XMX=X} \\
    (\bm{MX})^{\rm T}=\bm{MX} & (\bm{XM})^{\rm T}=\bm{XM}
\end{array}.
\end{equation}
\label{moore}
\end{theorem}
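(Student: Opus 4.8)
The plan is to recognize the four equations $(\dagger)$ as the defining Penrose conditions of the Moore--Penrose pseudoinverse of $\bm{M}$, and to establish existence and uniqueness separately.

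For existence, I would construct $\bm{X}$ explicitly from a singular value decomposition $\bm{M}=\bm{U}\bm{\Sigma}\bm{V}^{\rm T}$, with $\bm{U}\in\bbR^{m,m}$ and $\bm{V}\in\bbR^{n,n}$ orthogonal and $\bm{\Sigma}\in\bbR^{m,n}$ carrying the singular values $\sigma_1\ge\cdots\ge\sigma_r>0$ on its leading diagonal. Setting $\bm{X}=\bm{V}\bm{\Sigma}^{+}\bm{U}^{\rm T}$, where $\bm{\Sigma}^{+}\in\bbR^{n,m}$ arises from $\bm{\Sigma}$ by transposition and replacement of each nonzero $\sigma_i$ with $1/\sigma_i$, the orthogonality relations $\bm{U}^{\rm T}\bm{U}=\bm{I}$ and $\bm{V}^{\rm T}\bm{V}=\bm{I}$ collapse each product in $(\dagger)$ to a statement about $\bm{\Sigma}\bm{\Sigma}^{+}$ and $\bm{\Sigma}^{+}\bm{\Sigma}$. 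These are symmetric diagonal $0/1$ projection matrices, so the four identities reduce to the trivially verified diagonal relations $\bm{\Sigma}\bm{\Sigma}^{+}\bm{\Sigma}=\bm{\Sigma}$ and $\bm{\Sigma}^{+}\bm{\Sigma}\bm{\Sigma}^{+}=\bm{\Sigma}^{+}$ together with the symmetry of the two projections.

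For uniqueness, suppose $\bm{X}$ and $\bm{Y}$ both satisfy $(\dagger)$. I would first prove $\bm{MX}=\bm{MY}$: substituting $\bm{M}=\bm{MYM}$ into $\bm{MX}$ gives $\bm{MX}=\bm{MY}\bm{MX}$, and transposing this identity while using that $\bm{MX}$ and $\bm{MY}$ are symmetric rewrites it as $\bm{MX}=\bm{MX}\bm{MY}$; the right-hand side equals $\bm{MY}$ because $\bm{MY}=\bm{MX}\bm{MY}$ follows symmetrically from $\bm{M}=\bm{MXM}$. A mirror argument using the right-multiplication symmetry conditions gives $\bm{XM}=\bm{YM}$. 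The chain $\bm{X}=\bm{XMX}=\bm{X}(\bm{MY})=(\bm{XM})\bm{Y}=(\bm{YM})\bm{Y}=\bm{YMY}=\bm{Y}$ then finishes the proof.

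I expect the existence half to be the real obstacle rather than uniqueness, since $\bm{M}$ is an arbitrary rectangular and possibly rank-deficient matrix, so the construction must avoid any invertibility assumption; this is precisely what the SVD supplies (a full-rank factorization $\bm{M}=\bm{FG}$ with the closed form $\bm{X}=\bm{G}^{\rm T}(\bm{GG}^{\rm T})^{-1}(\bm{F}^{\rm T}\bm{F})^{-1}\bm{F}^{\rm T}$ is an equivalent route). Once a suitable factorization is fixed, verifying the four conditions is routine bookkeeping, whereas the uniqueness computation is short but hinges on choosing the correct order of substitutions.
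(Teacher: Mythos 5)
Your proposal is correct, and it matches the paper's treatment to the extent the paper has one: Theorem~\ref{moore} is stated there as a preliminary with a citation to Penrose (1955) and no proof, while your SVD existence construction $\bm{X}=\bm{V}\bm{\Sigma}^{+}\bm{U}^{\rm T}$ coincides exactly with the formula for $\bm{M}^{+}$ that the paper records immediately after Theorem~\ref{minimal}, and your uniqueness chain is Penrose's standard substitution argument, carried out correctly. The only caveat worth noting is in your closing aside: the full-rank-factorization formula $\bm{X}=\bm{G}^{\rm T}(\bm{GG}^{\rm T})^{-1}(\bm{F}^{\rm T}\bm{F})^{-1}\bm{F}^{\rm T}$ degenerates when $\bm{M}=\bm{0}$ (there is no rank-$r$ factorization with invertible Gram matrices for $r=0$), but your primary SVD route handles that case transparently, so the proof stands.
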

We call such $\bm{X}$ fulfilling \eqref{PMI}
the \emph{Moore-Penrose inverse} of $\bm{M}$ \cite{penrose1955generalized}, denoted as $\bm{M}^{+}$. 
Indeed, when $\bm{M}$ is a non-singular (square) matrix,  $\bm{M}^+$ coincides with $\bm{M}^{-1}$.

Moore-Penrose inverse also features a nice property in solving minimum least square problems.
\begin{theorem}
Suppose that matrix $\bm{M}\in\bbR^{m, n}$, for $\forall \bm{Z}\in\bbR^{n, p}$ and $\forall \bm{N}\in\bbR^{m, p}$,
 \begin{equation*}
     \|\bm{M}\bm{Z}- \bm{N} \|_F \geq \|\bm{M}\bm{M}^{+}-\bm{N}\|_F.
 \end{equation*}
i.e., 
$\bm{M}^{+}$ is an optimal solution of the optimization problem $\|\bm{M}\bm{Z}- \bm{N} \|_F$, where $\|\bullet\|_F$ denotes the Frobenius norm of a matrix \cite{james1978generalised, planitz19793}.
 \label{minimal}
\end{theorem}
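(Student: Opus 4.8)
The plan is to recognize this as the classical least-squares / orthogonal-projection result and prove it through the Pythagorean identity in the Frobenius (trace) inner product. Throughout I write $\langle \bm{A},\bm{B}\rangle = \mathrm{tr}(\bm{A}^{\rm T}\bm{B})$, so that $\|\bm{A}\|_F^2 = \langle \bm{A},\bm{A}\rangle$, and to make the dimensions consistent I read the right-hand side of the claimed inequality as $\|\bm{M}\bm{M}^{+}\bm{N}-\bm{N}\|_F$, i.e.\ the value attained at the candidate minimizer $\bm{Z}=\bm{M}^{+}\bm{N}$.

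First I would establish that $\bm{P}:=\bm{M}\bm{M}^{+}$ is an \emph{orthogonal} (self-adjoint, idempotent) projector. Symmetry $\bm{P}^{\rm T}=\bm{P}$ is exactly the condition $(\bm{MX})^{\rm T}=\bm{MX}$ from \eqref{PMI} with $\bm{X}=\bm{M}^{+}$, and idempotence follows from $\bm{MXM}=\bm{M}$, since $\bm{P}^2=\bm{M}\bm{M}^{+}\bm{M}\bm{M}^{+}=(\bm{M}\bm{M}^{+}\bm{M})\bm{M}^{+}=\bm{M}\bm{M}^{+}=\bm{P}$. The same relation also gives $\bm{P}\bm{M}=\bm{M}\bm{M}^{+}\bm{M}=\bm{M}$.

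Next, for an arbitrary $\bm{Z}\in\bbR^{n,p}$ I would split the residual as $\bm{M}\bm{Z}-\bm{N}=\bm{R}_1+\bm{R}_2$ with $\bm{R}_1=\bm{M}\bm{Z}-\bm{P}\bm{N}$ and $\bm{R}_2=\bm{P}\bm{N}-\bm{N}$, and show these are orthogonal. Using $\bm{P}\bm{M}=\bm{M}$ and $\bm{P}^2=\bm{P}$ one checks $\bm{P}\bm{R}_1=\bm{P}\bm{M}\bm{Z}-\bm{P}^2\bm{N}=\bm{M}\bm{Z}-\bm{P}\bm{N}=\bm{R}_1$, so $\bm{R}_1$ lies in the image of $\bm{P}$, while $\bm{P}\bm{R}_2=(\bm{P}^2-\bm{P})\bm{N}=\bm{0}$, so $\bm{R}_2\in\ker\bm{P}$. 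Hence, invoking $\bm{P}^{\rm T}=\bm{P}$,
\[
\langle \bm{R}_1,\bm{R}_2\rangle=\mathrm{tr}\!\left((\bm{P}\bm{R}_1)^{\rm T}\bm{R}_2\right)=\mathrm{tr}\!\left(\bm{R}_1^{\rm T}\bm{P}\bm{R}_2\right)=0 .
\]
The Pythagorean identity then yields $\|\bm{M}\bm{Z}-\bm{N}\|_F^2=\|\bm{R}_1\|_F^2+\|\bm{R}_2\|_F^2\ge\|\bm{R}_2\|_F^2=\|\bm{M}\bm{M}^{+}\bm{N}-\bm{N}\|_F^2$, and taking square roots gives the claim; equality holds exactly when $\bm{M}\bm{Z}=\bm{P}\bm{N}$, which $\bm{Z}=\bm{M}^{+}\bm{N}$ achieves, confirming optimality.

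The main obstacle is the orthogonality step: everything hinges on $\bm{P}=\bm{M}\bm{M}^{+}$ being a genuine self-adjoint projector, so the full strength of the Moore--Penrose defining equations \eqref{PMI} is required here---specifically the symmetry $(\bm{MX})^{\rm T}=\bm{MX}$ together with $\bm{MXM}=\bm{M}$. Without self-adjointness the cross term $\langle\bm{R}_1,\bm{R}_2\rangle$ need not vanish and the clean Pythagorean split (hence the minimality of $\bm{M}^{+}\bm{N}$) would fail; the remaining manipulations are routine trace-norm algebra.
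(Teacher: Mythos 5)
Your proof is correct, but there is nothing in the paper to compare it against: the paper states this theorem as a preliminary and attributes it to the literature (\cite{james1978generalised, planitz19793}) without giving any proof, so your argument stands on its own as the standard one from those references. Two points are worth noting. First, your silent repair of the statement is the right one: as printed, $\bm{M}\bm{M}^{+}\in\bbR^{m,m}$ cannot be subtracted from $\bm{N}\in\bbR^{m,p}$ unless $p=m$, and reading the right-hand side as $\|\bm{M}\bm{M}^{+}\bm{N}-\bm{N}\|_F$ (value at $\bm{Z}=\bm{M}^{+}\bm{N}$) is exactly consistent with how the paper later invokes the result in Theorem~\ref{the align}, where $\bm{N}=\bm{I}$ and the typo becomes invisible. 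Second, every step of your projector argument checks out: $\bm{P}=\bm{M}\bm{M}^{+}$ is symmetric by $(\bm{M}\bm{X})^{\rm T}=\bm{M}\bm{X}$ and idempotent by $\bm{M}\bm{X}\bm{M}=\bm{M}$, the split $\bm{M}\bm{Z}-\bm{N}=\bm{R}_1+\bm{R}_2$ with $\bm{R}_1\in\mathrm{im}\,\bm{P}$ and $\bm{R}_2\in\ker\bm{P}$ makes the cross trace vanish precisely because $\bm{P}$ is self-adjoint, and the Pythagorean identity plus attainment at $\bm{Z}=\bm{M}^{+}\bm{N}$ gives both the bound and the equality characterization $\bm{M}\bm{Z}=\bm{P}\bm{N}$. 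An alternative more in the spirit of the paper's toolkit would be to substitute the SVD $\bm{M}=\bm{U}\left(\begin{smallmatrix}\bm{\Sigma} & \bm{0}\\ \bm{0} & \bm{0}\end{smallmatrix}\right)\bm{V}^{\rm T}$ and minimize blockwise using unitary invariance of the Frobenius norm --- this is how the paper itself manipulates $\bm{B}\bm{B}^{+}-\bm{I}_n$ inside the proof of Theorem~\ref{the align} --- but your coordinate-free route is cleaner and has the merit of isolating exactly which of the four Penrose equations carry the weight (symmetry of $\bm{M}\bm{X}$ and $\bm{M}\bm{X}\bm{M}=\bm{M}$; the other two are never needed).
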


Moore-Penrose inverse of $\bm{M}$ can be computed as follows: Suppose that the singular value decomposition (SVD) of $\bm{M}$ is $\bm{U} \left(\begin{matrix} \bm{\Sigma} & \bm{0} \\ \bm{0} & \bm{0} \end{matrix}\right) \bm{V}^{T}$, where $\bm{U}\in\bbR^{m,m}$, 
$\bm{V}\in\bbR^{n,n}$ are unitary matrices, and
$\bm{\Sigma} ={\rm{diag}}(\sigma_1,\ldots,\sigma_r)$ with each $\sigma_i>0$,
then we have $\bm{M}^{+} = \bm{V} \left(\begin{matrix} \bm{\Sigma}^{-1} & \bm{0} \\ \bm{0} & \bm{0} \end{matrix}\right) \bm{U}^{T}$.

For a  multivariate vector-valued function $\bm{f}: \mathbb{R}^{n}\rightarrow \mathbb{R}^{m}$, its 
\emph{Jacobian matrix}  
is defined as  \cite{jacobi1841determinantibus}:
\begin{equation*}
    \jac_{\bm{f}}= \left[\frac{\partial{\bm{f}}}{\partial{x_1}}, \frac{\partial{\bm{f}}}{\partial{x_2}},  \cdots, \frac{\partial{\bm{f}}}{\partial{x_n}}\right]  = \begin{bmatrix}
    \frac{\partial{f_1}}{\partial{x_1}} & \frac{\partial{f_1}}{\partial{x_2}} & \cdots & \frac{\partial{f_1}}{\partial{x_n}} \\
    \frac{\partial{f_2}}{\partial{x_1}} & \frac{\partial{f_2}}{\partial{x_2}} & \cdots & \frac{\partial{f_2}}{\partial{x_n}} \\
    \vdots  & \vdots & \ddots & \vdots\\
    \frac{\partial{f_m}}{\partial{x_1}} & \frac{\partial{f_m}}{\partial{x_2}} & \cdots & \frac{\partial{f_m}}{\partial{x_n}}
        \end{bmatrix}.
\end{equation*}


Suppose $\bm{f}:\bbR^{m,n}\to\bbR^{k,\ell}$ and $\bm{Y}=\bm{f}(\bm{X})$, then we denote
$\frac{\partial\bm{Y}}{\partial\bm{X}}$ the Jacobian matrix $\jac_{\bm{f}^*}$, recall that
$\bm{f}^*$ is the vectorized mapping of $\bm{f}$.

\section{III Methodology}
\label{sec:method}
In this section, we illustrate the approach to tackling the credit assignment problem of trained {\ann}s from a linear dynamics perspective. We begin with block partition to divide an \ann into some smaller blocks for scalability. Then before utilizing the Koopman operator theory to approximate blocks with linear transformation, a minimal linear dimension alignment is proposed to be compatible with step-delay embedding. Finally, we assign the credit of each block with Jacobian matrices of the Koopman operators. 

For presentation brevity, a general neural network $\nn$ with $l+1$ layers is  used throughout this section. Thus, the function $f$ of neural network $\mathcal{N}$ can be represented by the composition of the transformation of each layer, i.e.,
\begin{equation*}
\begin{aligned}
    f(\bm{x}) &= f_l\circ f_{f-1} \circ \cdots \circ f_{2} \circ f_{1}(\bm{x}) \\
    &=\sigma_{l}(\bm{W}_{l}\cdots \sigma_{2}(\bm{W}_2\sigma_1(\bm{W}_1 \bm{x} + \bm{b}_1)+\bm{b}_2)\cdots +\bm{b}_{l})
\end{aligned}
     \label{nn}
\end{equation*}
where $\bm{W}_i$ and $\bm{b}_i$, $1 \leq i \leq l$, respectively stand for the weight matrix and bias  in the adjacent $i$-$th$ and $i+1$-$th$ layer, completing the so-called \emph{affine transformation}. $\sigma_i$ represents the corresponding \emph{activation function} following the affine transformation, such as ReLU, Sigmoid, Tanh and so on, which results in the non-linearity of neural networks. For a given input vector $\bm{x}$, there would generate a vector flow $\bm{x}_0($i.e.,$ \bm{x}), \bm{x}_1, \cdots, \bm{x}_l$,  meanwhile, for $i$-$th$ network layer, it establishes a mapping defined as $\bm{x}_{i} = \sigma_{i-1}(\bm{W}_{i-1} \bm{x}_{i-1}+\bm{b}_{i-1})$, from its input $\bm{x}_{i-1}$ to its output $\bm{x}_{i}$.

\subsection{A. Block Partition.}
Block partition refers to dividing a neural network $\nn$ into some blocks.  In essence,  block partition is put forward out of benign scalability and it is a divide-and-conquer technique. There are two main  concerns for block division: on the one hand, block partition allows us to tackle neural networks with large scale, because we only concentrate on the inputs and outputs of blocks, whereas the inside of the blocks is treated as a black box;  on the other hand,  during this process, we can highlight the layers or components interested in with finer partition  and  aggregate others in a coarser style.

Moreover, blocks match well with the basic structure of neural networks. Generally, a block contains some layers of network $\nn$, such as in feed-forward neural networks (FFNN)  and convolution neural networks (CNN) \cite{bishop2006pattern}. In recurrent neural networks (RNN) \cite{gers2000learning} or residual networks (ResNet) \cite{he2016deep}, better still, layers are divided into some network modules, which have been considered during the design process, providing great convenience and guidance for block partition.

Herein, we partition the neural network $\nn$ into $m$ blocks $\bl_1, \ \bl_2, \ \cdots, \ \bl_m$, where $ 1 \leq m \leq l$ and each block contains the mapping established by a network layer or the composition of established mappings behind some successive layers. More concretely, suppose the block $\bl_i$ contains the $j$-$th$ to $k$-$th$ layer, then the function corresponding to $\bl_i$ is defined as: $f_i=\sigma_{k}(\bm{W}_{k}\cdots\sigma_{j}(\bm{W}_j\sigma_{j-1}(\bm{W}_{j-1} \bm{x}_{j-1}+\bm{b}_{j-1})+\bm{b}_j)\cdots+\bm{b}_{k})$. The number of the network layers contained in block $\bl_i$ is specified with ${\rm{len}}(\bl_i)$ and a certain layer in $\bl_i$ is denoted by $\bl_{ij}$, $j\in \{1,2, \cdots, {\rm{len}}(\bl_i)\}$. Besides, ${\rm{dim}}(\bl_{ij})$ describes the dimension of the layer $\bl_{ij}$, namely, the number of neurons in it.

\paragraph{Remark.} Block partition makes it possible to solve the large scale neural networks, with partitioning larger blocks into smaller size ones for the next processing iteration.

\subsection{B. Minimal Linear Dimension Alignment.}
When block partition finishes, one can do  linearization upon each subsystem $\bl_i$, $i\in\{1,2,\cdots, m\}$ with Koopman operator theory. Nevertheless, before that, we should beware that such blocks can be categorized into variant types according to their input/output dimensions. For the sake of step-delay embedding discussed in the next subsection, we need do some slight `surgeries'
beforehand, explained below.

\begin{figure}[htbp]
\centering
\subfigure[S2L.]{
\begin{minipage}[t]{0.32\linewidth}
\centering
\includegraphics[width=0.75in]{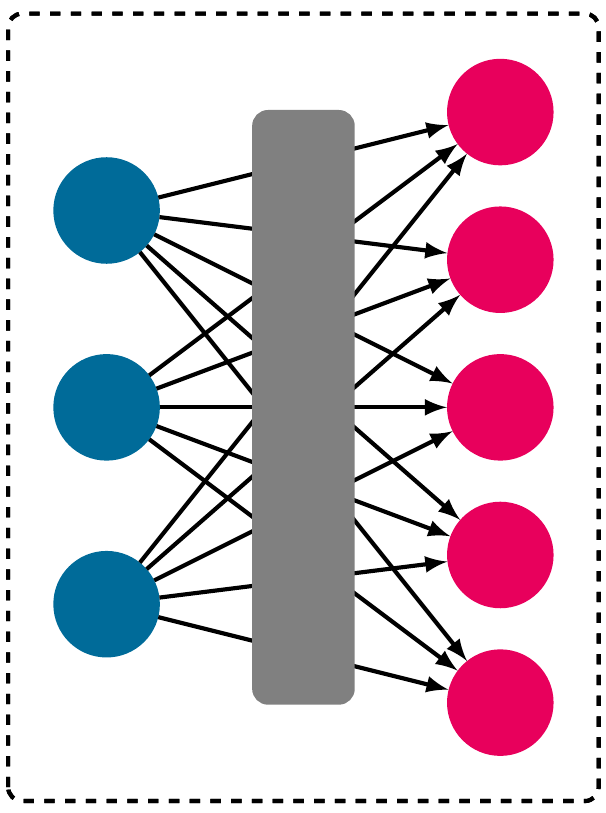}
\label{s2l}
\end{minipage}%
}%
\subfigure[L2S.]{
\begin{minipage}[t]{0.32\linewidth}
\centering
\includegraphics[width=0.77in]{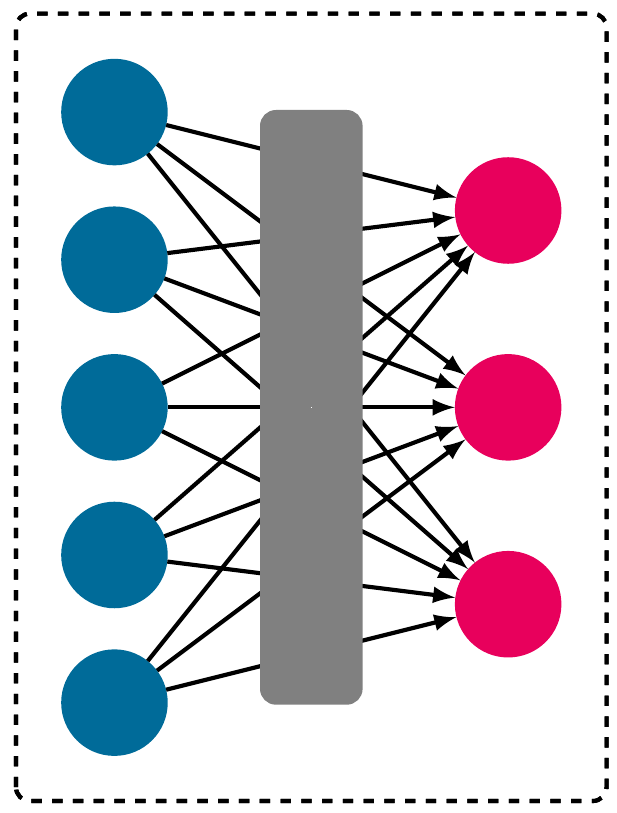}
\label{l2s}
\end{minipage}%
}%
\subfigure[E2E.]{
\begin{minipage}[t]{0.32\linewidth}
\centering
\includegraphics[width=0.77in]{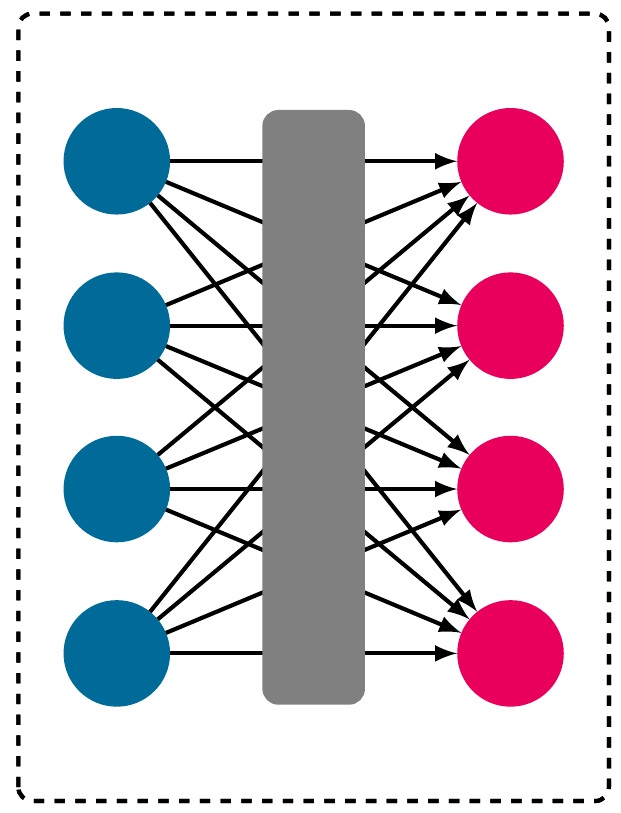}
\label{e2e}
\end{minipage}
}
\centering
\caption{Demonstrations of typical block types.}
\label{block type}
\end{figure}

For a block $\bl_i$,  it ought to establish a mapping $f_i$, yet we here treat it as a black box, and are more concerned about its input-output dimensions. According to them, the blocks can be categorized into three types: as shown in Fig.  \ref{block type}, from Fig. \ref{s2l} to \ref{e2e}, are respectively called 
\begin{description}
\centering
\item[S2L]: when $ {\rm{dim}}(\bl_{i0}) < {\rm{dim}}(\bl_{i,{\rm{len}}({\bl}_i)})$;
\item[L2S]: when ${\rm{dim}}(\bl_{i0}) > {\rm{dim}}(\bl_{i,{\rm{len}}(\bl_i)})$;
\item[E2E]: when ${\rm{dim}}(\bl_{i0}) = {\rm{dim}}(\bl_{i,{\rm{len}}(\bl_i)})$.
\end{description}
 In Fig.\ref{block type}, only the input and output layers of the blocks are explicitly exhibited and the hidden layers are abstracted by gray bars.

When a batch of input samples are fed into $\nn$, we obtain the input and output of each block, called a \emph{snapshot} at one step. To characterize the mapping $f_i$ behind in $\bl_i$ more detailedly, we utilized the step-delay embedding technique to get more snapshots, which is illustrated in Subsection II.C and requires to feed the output into blocks iteratively. For L2S and/or S2L blocks, the gap between input and output dimensions bring about difficulties in step-delay embedding. To circumvent it, we present the so-called  \emph{minimal linear dimension alignment method} to replace a L2S/S2L block with some E2E block which is most `approximated' to the original block in some sense. Fig. \ref{dim align 1}  and \ref{dim align 2} schematically demonstrate the idea for dealing with S2L blocks and L2S blocks respectively. Here, we have explicitly separated the affine transformation and the activation within the block, and to emphasize the goal of dimension alignment is to compose it with some linear transformation layer, denoted by matrix $\bm{A}$, making the input/output dimensions coincide (as shown in Fig. \ref{dim align}). Notably, we also need to decompose the added linear transformation layer at last, with another linear transformation, namely, matrix $\bm{B}$. Meanwhile, we require that the composition and the decomposition of  added component affects the original blocks as `minimal' as possible, and it should be as simple as possible to construct in practice, which is formulated as follows.

\begin{figure}[htbp]
\centering
\subfigure[Alignment of S2L case.]{
\begin{minipage}[t]{0.5\linewidth}
\centering
\includegraphics[width=1.4in]{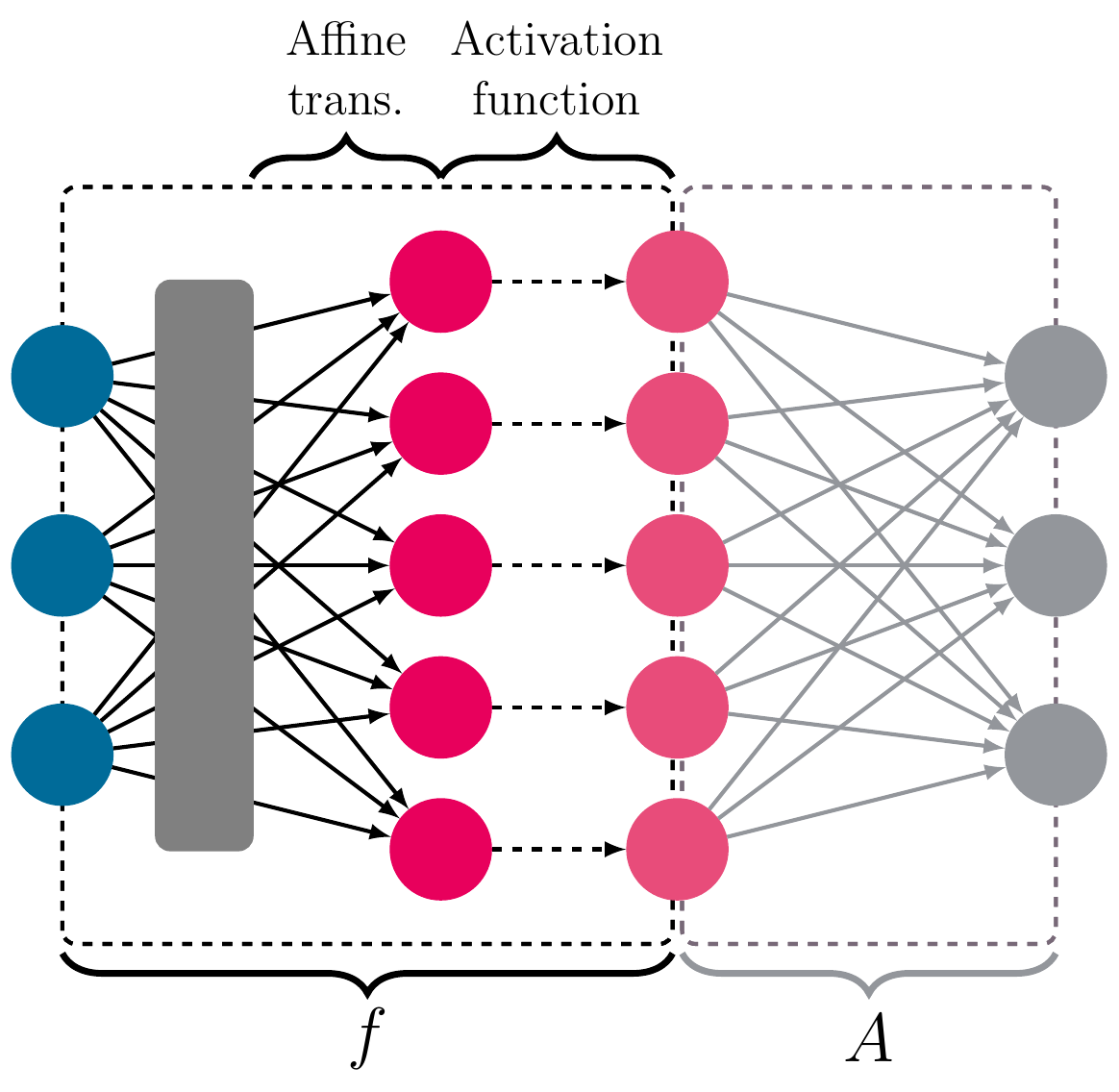}
\label{dim align 1}
\end{minipage}%
}%
\subfigure[Alignment of L2S case.]{
\begin{minipage}[t]{0.5\linewidth}
\centering
\includegraphics[width=1.4in]{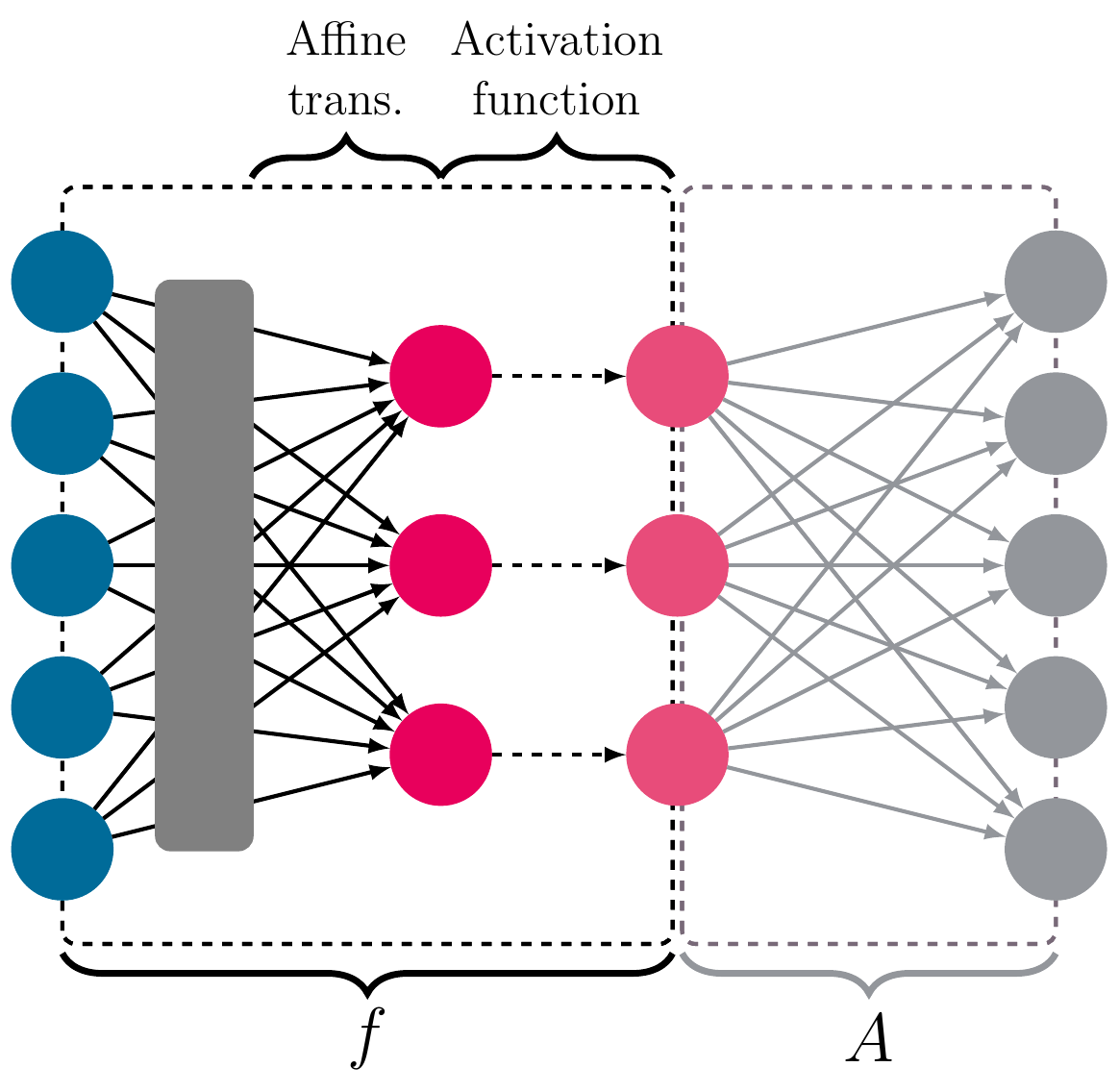}
\label{dim align 2}
\end{minipage}%
}%
\centering
\caption{ Minimal linear dimension alignment.}
\label{dim align}
\end{figure}

\paragraph{Minimal linear composition and decomposition.} Suppose that the matrix representing the alignment layer is $\bm{A}\in\bbR^{m,n}$ and its decomposition is denoted by matrix $\bm{B}\in\bbR^{n, m}$, `minimal composition and decomposition' requires that the composition transformation $\bm{BA}$ approximates identify mapping $\bm{I}$ as exactly as possible, i.e., making the objective function $\|\bm{BA}-\bm{I}_n\|_F$ minimal, recall that  $\|\bullet\|_F$ denotes the Frobenius norm of the matrix.
For this, we have the following theorem to reveal the constrcution of matrix $\bm{A}$ and $\bm{B}$.

\begin{theorem}
    If $\bm{B}\in\bbR^{n, m}$ is a matrix with ${\rm{rank}}(\bm{B})=\min{(m,n)}$ and  $\bm{A} = \bm{B}^{+}$, then $\|\bm{BA}-\bm{I}_n\|_F$ obtains the minimal value $\sqrt {\max ((n-m), 0)}$.
    
    \label{the align}
\end{theorem}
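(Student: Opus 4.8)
The plan is to read $\|\bm{BA}-\bm{I}_n\|_F$ as the Frobenius distance from the identity to the product $\bm{BA}$, and to exploit the fact that this product is forced to be low rank: since $\bm{B}\in\bbR^{n,m}$ and $\bm{A}\in\bbR^{m,n}$ both factor through $\bbR^m$, we always have $\rank(\bm{BA})\le m$, regardless of how $\bm{A}$ and $\bm{B}$ are chosen. The statement then splits into two independent pieces — a lower bound $\|\bm{BA}-\bm{I}_n\|_F\ge\sqrt{\max(n-m,0)}$ valid for \emph{every} admissible pair $(\bm{A},\bm{B})$, and the verification that the prescribed choice (full-rank $\bm{B}$, $\bm{A}=\bm{B}^+$) attains it.

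First I would establish the lower bound purely from the rank constraint. Put $\bm{C}=\bm{BA}$, so $\rank(\bm{C})\le m$ and hence $\dim\ker(\bm{C})\ge n-m$. If $m\ge n$ the claimed bound is $0$ and there is nothing to prove, so assume $m<n$. I would pick orthonormal vectors $\bm{u}_1,\dots,\bm{u}_{n-m}\in\ker(\bm{C})$, extend them to an orthonormal basis $\bm{u}_1,\dots,\bm{u}_n$ of $\bbR^n$, and use the identity $\|\bm{M}\|_F^2=\sum_{i=1}^n\|\bm{M}\bm{u}_i\|_2^2$ (valid for any orthonormal basis, since $\sum_i\bm{u}_i\bm{u}_i^{\rm T}=\bm{I}_n$). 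Because $(\bm{I}_n-\bm{C})\bm{u}_i=\bm{u}_i$ for $i\le n-m$, this gives $\|\bm{I}_n-\bm{C}\|_F^2\ge\sum_{i=1}^{n-m}\|\bm{u}_i\|_2^2=n-m$, i.e. $\|\bm{BA}-\bm{I}_n\|_F\ge\sqrt{\max(n-m,0)}$ for all $\bm{A},\bm{B}$.

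Next I would prove achievability for the stated construction. Fix $\bm{B}$ with $\rank(\bm{B})=\min(m,n)$ and take $\bm{A}=\bm{B}^{+}$; by Theorem~\ref{minimal} the choice $\bm{A}=\bm{B}^{+}$ already minimizes $\|\bm{BA}-\bm{I}_n\|_F$ over all $\bm{A}$ for this fixed $\bm{B}$, so it remains to evaluate $\|\bm{B}\bm{B}^{+}-\bm{I}_n\|_F$. Writing the SVD $\bm{B}=\bm{U}\bm{\Sigma}\bm{V}^{\rm T}$ and the corresponding formula $\bm{B}^{+}=\bm{V}\bm{\Sigma}^{+}\bm{U}^{\rm T}$ quoted after Theorem~\ref{minimal}, I get $\bm{B}\bm{B}^{+}=\bm{U}(\bm{\Sigma}\bm{\Sigma}^{+})\bm{U}^{\rm T}$, where $\bm{\Sigma}\bm{\Sigma}^{+}$ is the $n\times n$ diagonal matrix carrying $\min(m,n)$ ones followed by zeros. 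By unitary invariance of the Frobenius norm, $\|\bm{B}\bm{B}^{+}-\bm{I}_n\|_F=\|\bm{\Sigma}\bm{\Sigma}^{+}-\bm{I}_n\|_F=\sqrt{\,n-\min(m,n)\,}=\sqrt{\max(n-m,0)}$, matching the lower bound and proving the value is minimal.

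The hard part is conceptual rather than computational: one must notice that the optimization ranges over $\bm{A}$ \emph{and} $\bm{B}$ jointly, so Theorem~\ref{minimal} alone — which fixes $\bm{B}$ — cannot certify global minimality. The decisive observation is the rank bound $\rank(\bm{BA})\le m$, which turns the problem into ``best low-rank approximation of $\bm{I}_n$'' and furnishes the uniform lower bound; after that, only the SVD evaluation and the bookkeeping collapsing $n-\min(m,n)$ into $\max(n-m,0)$ remain, both routine.
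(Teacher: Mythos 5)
Your proof is correct, and its lower-bound half takes a genuinely different route from the paper's. The paper proves the bound by fixing $\bm{B}$ and invoking the least-squares optimality of the Moore--Penrose inverse (Theorem~\ref{minimal}) to get $\|\bm{B}\bm{A}-\bm{I}_n\|_F\geq\|\bm{B}\bm{B}^{+}-\bm{I}_n\|_F$, then computing via the SVD that $\|\bm{B}\bm{B}^{+}-\bm{I}_n\|_F=\sqrt{n-r}$ with $r=\rank(\bm{B})$, which is minimized at $r=\min(m,n)$; note that since $r\leq\min(m,n)$ always holds, this chain does in fact certify minimality over all pairs $(\bm{A},\bm{B})$, so your remark that Theorem~\ref{minimal} alone ``cannot certify global minimality'' slightly undersells the paper's argument --- the $\sqrt{n-r}$ computation swept over all ranks is what closes that gap there. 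Your alternative is an Eckart--Young-style argument: $\rank(\bm{B}\bm{A})\leq m$ forces $\dim\ker(\bm{B}\bm{A})\geq n-m$, and summing $\|(\bm{I}_n-\bm{B}\bm{A})\bm{u}_i\|_2^2$ over an orthonormal basis adapted to the kernel gives $\|\bm{B}\bm{A}-\bm{I}_n\|_F^2\geq n-m$ directly, with no appeal to Moore--Penrose theory at all; your citation of Theorem~\ref{minimal} in the achievability step is then harmless but redundant, since you evaluate $\|\bm{B}\bm{B}^{+}-\bm{I}_n\|_F$ explicitly anyway, exactly as the paper does via $\bm{B}\bm{B}^{+}=\bm{U}(\bm{\Sigma}\bm{\Sigma}^{+})\bm{U}^{\rm T}$ and unitary invariance. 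What your route buys is self-containedness and transparency about \emph{why} the value $\sqrt{\max(n-m,0)}$ is a hard floor (it is the distance from $\bm{I}_n$ to the set of rank-$\leq m$ matrices); what the paper's route buys is economy, reusing its already-stated Theorem~\ref{minimal} and SVD formula --- and, incidentally, your write-up avoids the paper's typographical slips ($\bm{V}$ where $\bm{U}$ is meant, and ``$\rank(\bm{A})=r$'' for $\rank(\bm{B})=r$).
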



\begin{proof}
 W.l.o.g., suppose that $\bm{B}\in\bbR^{m, n}$ and $\forall \bm{A}\in\bbR^{n, m}$, from Theorem \ref{minimal} we have
\begin{equation*}
     \|\bm{B}\bm{A}- \bm{I}_n \|_F \geq \|\bm{B}\bm{B}^{+}-\bm{I}_n \|_F.
     \label{fromBasic}
 \end{equation*}
Further, let $\rank(\bm{A})=r$, then:
 \begin{equation*}
      \begin{array}{rl}
     \bm{B}\bm{B}^{+}- \bm{I}_n    & =  \bm{V} \left(\begin{matrix} \bm{\Sigma} & \bm{0} \\ \bm{0} & \bm{0} \end{matrix}\right)  \left(\begin{matrix} \bm{\Sigma} & \bm{0} \\ \bm{0} & \bm{0} \end{matrix}\right)^{-1} \bm{V}^{T}  -\bm{I}_n \\
            & =\bm{V} \left(\begin{matrix} \bm{I}_r & \bm{0} \\ \bm{0} & \bm{0} \end{matrix}\right) \bm{V}^{T} - \bm{V} \bm{I}_n \bm{V}^{T} \\
           & = \bm{V} \left(\begin{matrix} \bm{0} & \bm{0} \\ \bm{0} & -\bm{I}_{n-r} \end{matrix}\right)   \bm{V}^{T},
      \end{array}
  \end{equation*}
and we subsequently have
\begin{equation*}
      \begin{array}{ll}
        ||\bm{B}\bm{B}^{+}- \bm{I}_n||_{F}  & = \sqrt{ {\rm{tr}}((\bm{B}\bm{B}^{+}- \bm{I})^{T}(\bm{B}\bm{B}^{+}- \bm{I}))}\\
        & = \sqrt{\rm{tr}\left(\begin{matrix} \bm{0} & \bm{0} \\ \bm{0} & \bm{I}_{n-r} \end{matrix} \right)}\\
        & = \sqrt{n-r}.
      \end{array}
      \label{eq f}
  \end{equation*}
Hence, when taking $r=\min(m,n)$,
\begin{equation*}
    \|\bm{B}\bm{B}^{+}-\bm{I}_n\|_F = \sqrt{\max((n-m),0)},
\end{equation*}
and we have
\begin{equation*}
\begin{array}{ll}
\|\bm{B}\bm{A}- \bm{I}_n \|_F &\geq \|\bm{B}\bm{B}^{+}-\bm{I}_n \|_F \\ & = \sqrt{\max((n-m),0)}.   
\end{array}
\end{equation*}
Thus, the proof is completed.

\end{proof}

Theorem \ref{the align} reveals the construction requirements of  matrices $\bm{A}$ and $\bm{B}$, that is to say, $\bm{B}$ should be  column or row full rank, and  $\bm{A}$ is the Moore-Penrose inverse of matrix $\bm{B}$.  With Theorem \ref{moore}, $\bm{A}$ uniquely exists.

\begin{remark}
By the way, from the conclusion of Theorem \ref{the align}, we are more preferable to the L2S and E2E blocks. In the case that a S2L block is inevitable, try to find a minimal dimension difference when doing the block partition phase. 
\end{remark}

The blocks after completing the minimal linear dimension alignment are denoted by $\tilde{\bl_i}$, $i\in\{1,2,\cdots, m\}$.

\subsection{C. Koopman Approximation with Step-delay Embedding.}
Now, regarding each block $\tilde{\bl_i}$ as a dynamical subsystem, Koopman operator yields a strong theoretical support for linearization of the hidden function $\tilde{f_i}$ corresponding to $\tilde{\bl_i}$. The approximation procedure is carried out in three phases, namely, \emph{step-delay embedding}, \emph{Koopman approximation} and \emph{linear decomposition}.

\paragraph{Step-delay Embedding.}
Notably, once the block $\tilde{\bl_i}$ is obtained from the network $\nn$ (possibly with dimension alignment), the corresponding mapping $\tilde{f_i}$ is determined at the same time, having nothing to do with concrete inputs but only the input-output pairs. To characterize the mapping $\tilde{f_i}$ more precisely, we need to do enough iterations to get snapshots of it, which requires to obtain the evolution of successive steps of block $\tilde{\bl_i}$. Doing series evolution is to feed the block with the current output to get the next output, implying the same dimension of input and output, and hence dimension alignment is required. According to Whitney theorem (or, Takens theorem) \cite{whitney1936differentiable, takens1981detecting, sauer1991embedology} and its application in delay embedding, such as delay-coordinate embedding  \cite{ takeishi2017learning} and time-delay embedding \cite{brunton2017chaos}, it yields a theoretical guarantee for the number of required iterations that preserves the structure of the state space corresponding to network block $\bl_i$, i.e., $2d+1$.  This procedure is shown in Fig. \ref{fig:delay}, termed step-delay embedding herein. As for the value of hyper-parameter $d$, we choose the minimal dimension of the network layer in $\tilde{\bl_i}$,
\begin{equation*}
    d := \min \{ {\rm{dim}}(\tilde{\bl}_{ij}) \ |\ j \in \{1,2,\cdots, {\rm{len}}(\tilde{\bl_i})\} \}
\end{equation*}
as the candidate value for the reason that these dimensions are enough to characterize the features for a trained and well-performed network component, according to manifold learning \cite{tenenbaum2000global}. A faithful series revolution representation  is achieved via the step-delay embedding then, with the following form:
\begin{equation*}
\bm{y}(0),\ \bm{y}(1),\ \cdots,\ \bm{y}(2d), 
\end{equation*}
and
\begin{equation*}
\bm{y}(k+1) = \tilde{f_i}(\bm{y}(k)),\ k\in \{0,1,2,\cdots, 2d-1\}.    
\end{equation*}

\begin{figure}[htbp]
    \centering
    \includegraphics[scale=0.4]{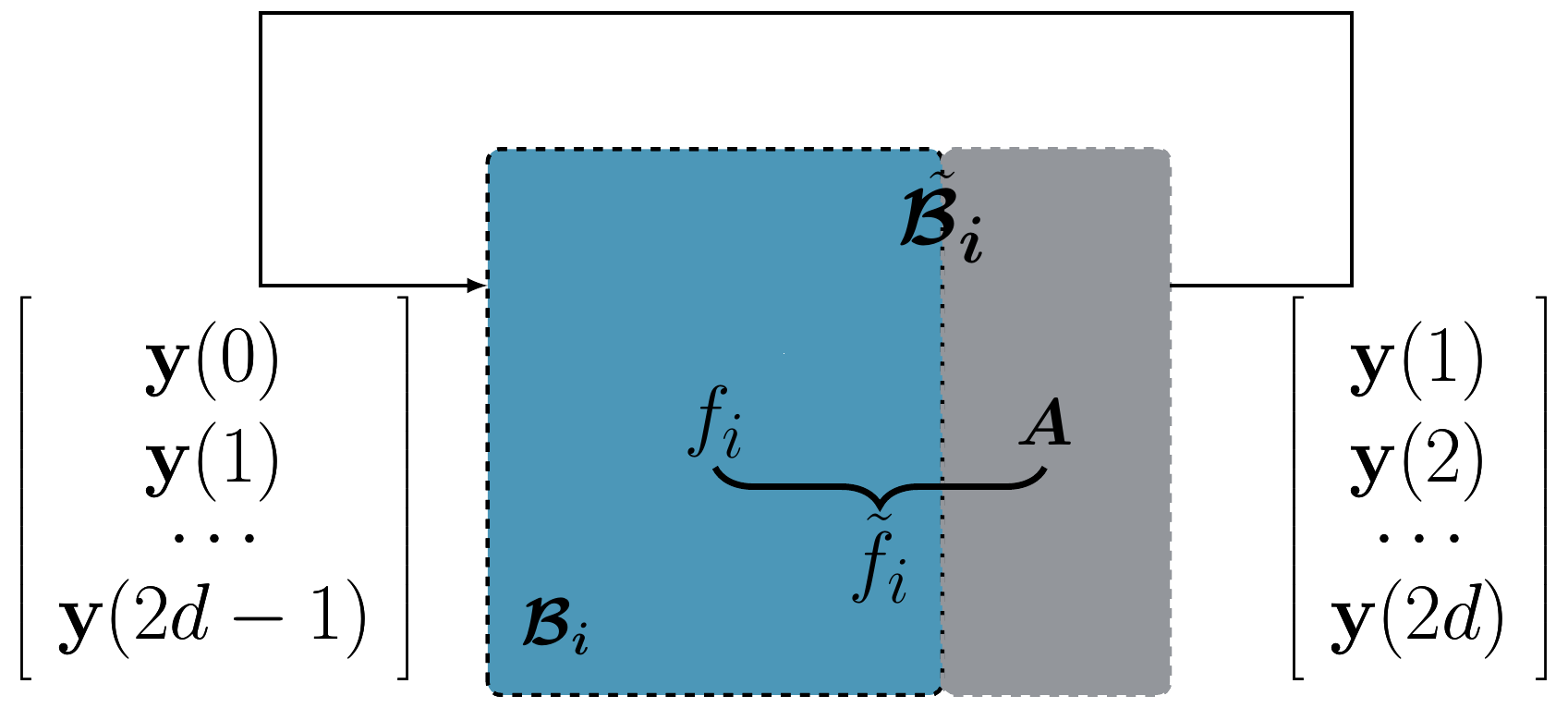}
    \caption{Step-delay embedding.}
    \label{fig:delay}
\end{figure}

\paragraph{Koopman Approximation.}
The goal of Koopman approximation is to construct a linear operator which could be viewed as an approximation of a given nonlinear mapping. 
Herein, as aforementioned,  we take the identity function as observation function on the dynamics variables, i.e., inputs and outputs of network blocks, and utilize  Dynamic Mode Decomposition (DMD) algorithm to learn the linear operator from snapshots. 
Suppose that $\tilde{\bm{K}_i}$ is the Koopman operator obtained from DMD, then we have
\begin{equation*}
    \bm{y}(k+1) \approx {\tilde{\bm{K}}_i}\bm{y}(k), \  k\in \{0,1,2,\cdots, 2d-1\}. 
\end{equation*}

\paragraph{Linear Decomposition.}
Since block $\tilde{\bl_i}$ is consisted with the block $\bl_i$ and an auxiliary linear layer denoted by $\bm{A}$, to recover the hidden function of $\bl_i$, we need to eliminate the effect of the linear layer from $\tilde{\bm{K}_i}$. 
According to Theorem \ref{the align}, we have
\begin{equation}
    \bm{K}_i  \approx {\bm{B}} {\bm{A}} \bm{K}_i =  {\bm{B}} \tilde{\bm{K}_i}.
    \label{eq de}
\end{equation} 
Then the alignment layer is decoupled from $\tilde{\bl_i}$ with Eqn. \eqref{eq de} and the final Koopman operator associated with block $\bl_i$ is obtained.

\subsection{D. Learning Credit Assignment.}
For a network $\nn$, each block  can be featured by a Koopman operator and then the network can be represented by the composition of these operators.  Consequently, the credit assignment of each block is reduced into the contribution of the corresponding matrix to the whole transformation. Suppose that $\bm{K}=\bm{K}_m\bm{K}_{m-1} \cdots \bm{K}_2 \bm{K}_1$, what we need to do is to evaluate the contribution of each $\bm{K}_i$ to $\bm{K}$.

By the aid of backward propagation and Jacobian matrix,  we can obtain the  `partial derivative of $\bm{K}$ w.r.t. $\bm{K}_i$' and we may take the absolute value of its determinant as the block sensitivity of  $\bm{K}_i$, which describes the effect of the change of $\bm{K}_i$ on $\bm{K}$ and is  declared as follows.

\paragraph{Block Sensitivity (BS)} The \emph{block sensitivity} $\bl_i$ is defined as the absolute value of the determinant corresponding to partial derivative of $\bm{K}$ w.r.t. $\bm{K}_i$, i.e,  BS$_i:=\abs\left( \left | \frac{\partial  \bm{K}}{\partial \bm{K_i}}\right |\right)$. 

The determinant of the corresponding Jacobian matrix is derived by the following theorem.

\begin{theorem}
  Let  $\bm{K}=\bm{K}_m\bm{K}_{m-1} \cdots \bm{K}_2 \bm{K}_1$ and suppose that all the matrices here are  all  square matrices with order $n$, then $\left | \frac{ \partial  \bm{K}}{\partial \bm{K}_i }\right | = \prod \limits_{s\neq i} \left | \bm{K}_s \right |^{n}$,  $(1 \leq i \leq m)$.
  \label{the bs}
\end{theorem}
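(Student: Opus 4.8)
The plan is to isolate the single factor $\bm{K}_i$ inside the product, recognize the required derivative as the Jacobian of a \emph{linear} vectorized map, and then read off its determinant from a standard Kronecker-product identity. First I would group the fixed factors by setting $\bm{P} = \bm{K}_m \cdots \bm{K}_{i+1}$ and $\bm{Q} = \bm{K}_{i-1} \cdots \bm{K}_1$ (with the convention that an empty product equals $\bm{I}_n$, covering the cases $i=m$ and $i=1$), so that $\bm{K} = \bm{P}\,\bm{K}_i\,\bm{Q}$. Holding every $\bm{K}_s$ with $s\neq i$ fixed, $\bm{K}$ becomes a matrix-valued function of the single matrix variable $\bm{K}_i$, and by the convention fixed in the preliminaries the symbol $\frac{\partial \bm{K}}{\partial \bm{K}_i}$ denotes the Jacobian $\jac_{\bm{f}^*}$ of the vectorized map $\bm{f}^*$ attached to $\bm{f}(\bm{X}) = \bm{P}\,\bm{X}\,\bm{Q}$.

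Next I would invoke Theorem~\ref{linear}: since $\bm{f}(\bm{X}) = \bm{P}\bm{X}\bm{Q}$ has the form $\bm{A}\bm{X}\bm{B}$, its vectorization is the linear map $\bm{f}^*(\bm{v}) = (\bm{P}\otimes\bm{Q}^{\rm T})\,\bm{v}$. Because a linear map has a constant Jacobian equal to its own coefficient matrix, this gives
\begin{equation*}
  \frac{\partial \bm{K}}{\partial \bm{K}_i} = \bm{P}\otimes\bm{Q}^{\rm T}.
\end{equation*}
Both $\bm{P}$ and $\bm{Q}^{\rm T}$ are $n\times n$ square matrices, so this Jacobian is an $n^2\times n^2$ square matrix and its determinant is well defined.

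Finally I would compute the determinant using the Kronecker determinant identity: for square matrices $\bm{A},\bm{B}\in\bbR^{n,n}$ one has $\left|\bm{A}\otimes\bm{B}\right| = \left|\bm{A}\right|^{n}\left|\bm{B}\right|^{n}$. Applying this with $\bm{A}=\bm{P}$ and $\bm{B}=\bm{Q}^{\rm T}$, using $\left|\bm{Q}^{\rm T}\right| = \left|\bm{Q}\right|$, and using multiplicativity of the determinant over the products defining $\bm{P}$ and $\bm{Q}$, I obtain
\begin{equation*}
  \left|\frac{\partial \bm{K}}{\partial \bm{K}_i}\right| = \left|\bm{P}\right|^{n}\left|\bm{Q}\right|^{n} = \Bigl(\prod_{s>i}\left|\bm{K}_s\right|\Bigr)^{n}\Bigl(\prod_{s<i}\left|\bm{K}_s\right|\Bigr)^{n} = \prod_{s\neq i}\left|\bm{K}_s\right|^{n},
\end{equation*}
which is the claimed formula. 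The only nonroutine ingredient is the Kronecker determinant identity, so the main obstacle—beyond bookkeeping the empty pre- and post-products—is to state and cite that identity and to double-check the transpose convention in Theorem~\ref{linear}, so that the factor $\bm{Q}^{\rm T}$ contributes $\left|\bm{Q}\right|$; since transposition preserves the determinant, this detail does not affect the final count.
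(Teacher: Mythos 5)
Your proposal is correct and takes essentially the same route as the paper: apply Theorem~\ref{linear} to identify $\frac{\partial\bm{K}}{\partial\bm{K}_i}$ as a Kronecker product of the fixed pre- and post-factors, then invoke the identity $\lvert\bm{A}\otimes\bm{B}\rvert=\lvert\bm{A}\rvert^{n}\lvert\bm{B}\rvert^{n}$ together with multiplicativity of the determinant. The only differences are cosmetic: you unify the paper's three-case split ($i=1$, $2\le i\le m-1$, $i=m$) via the empty-product convention $\bm{P},\bm{Q}=\bm{I}_n$, and your ordering $\bm{P}\otimes\bm{Q}^{\rm T}$ follows Theorem~\ref{linear} as stated (the paper's intermediate display places the factors the other way around, which is immaterial for the determinant since $\lvert\bm{Q}^{\rm T}\rvert=\lvert\bm{Q}\rvert$).
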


\begin{proof}
According to the Theorem \ref{linear}, we have

\begin{equation}
\left |\frac{\partial  \bm{K}}{\partial \bm{K}_i}\right | =
\begin{cases}
\left | \bm{I}_n \otimes (\prod \limits_{t=2}^{m} \bm{K}_t)^\mathrm{T} \right | &,  \text{$i=1$} \\
\left | \prod \limits_{s=1}^{i-1} \bm{K}_s \otimes (\prod \limits_{t=i+1}^{m} \bm{K}_t)^\mathrm{T} \right |&, \text{$2 \leq i \leq m-1$}\\
\left | \prod \limits_{s=1}^{m-1} \bm{K}_s \otimes \bm{I}_n \right |  &,  \text{$i=m$} 
\end{cases}.
\label{Jacobian}
\end{equation}
Further, for the determinant of Kronecker product can be decomposed as follow,
\begin{equation} 
\lvert \bm{M}\otimes \bm{N} \rvert = \lvert\bm{M}\rvert ^p \times \lvert \bm{N} \rvert^q
\label{kdet}
\end{equation}

where $p$ and $q$ are the orders of the corresponding matrices (both are square matrices). Then combine Eqn.\eqref{Jacobian} and  Eqn.\eqref{kdet} together, we can compute $\left | \frac{\partial \bm{K}}{\partial \bm{K_i}} \right | $ as following

\begin{equation*}
\left | \frac{ \partial  \bm{K}}{\partial \bm{K}_i }\right | =
\begin{cases}
  \left | (\prod \limits_{t=2}^{m} \bm{K}_t)^\mathrm{T} \right |^{n}   &,  \text{$i=1$} \\
\left | \prod \limits_{s=1}^{i-1} \bm{K}_s \right| ^{n} \times  \left| (\prod \limits_{t=i+1}^{m} \bm{K}_t)^\mathrm{T} \right |^{n}  &, \text{$2 \leq i \leq m-1$}\\
\left | \prod \limits_{s=1}^{m-1} \bm{K}_s\right | ^{n} &,  \text{$i=l$} 
\end{cases},
\label{KJacobian}
\end{equation*}
and we thus have
\begin{equation}
\left | \frac{ \partial  \bm{K}}{\partial \bm{K}_i }\right| = \prod \limits_{s\neq i} \left | \bm{K}_s \right |^{n} ,  \quad \text{$1 \leq i \leq m$}
\label{FJacobian}
\end{equation}
\end{proof}

\begin{theorem}
When the Koopman matrix of network block determines, the specific position of network block does not affect the block sensitivity.
\end{theorem}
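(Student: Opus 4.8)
The plan is to obtain this statement as an immediate corollary of Theorem \ref{the bs} together with the definition of block sensitivity; the entire content is a commutativity observation about determinants. First I would unfold the definition $\mathrm{BS}_i := \abs\left(\left|\frac{\partial \bm{K}}{\partial \bm{K}_i}\right|\right)$ and substitute the closed form supplied by Theorem \ref{the bs}, obtaining
\begin{equation*}
\mathrm{BS}_i = \abs\left(\prod_{s\neq i}|\bm{K}_s|^{n}\right) = \prod_{s\neq i}\bigl|\det(\bm{K}_s)\bigr|^{n},
\end{equation*}
where the inner $|\cdot|$ denotes the determinant in the paper's convention and the outer absolute value distributes over the finite product of scalars. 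The structural fact to record is that the right-hand side is a product of nonnegative reals, each factor being the $n$-th power of the absolute determinant of one of the blocks other than $\bm{K}_i$.

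Next I would formalize what ``position does not affect block sensitivity'' means and then prove invariance. Fix the block of interest together with the value of its Koopman matrix $\bm{K}_i$, and consider any reordering of the chain $\bm{K}=\bm{K}_m\cdots\bm{K}_1$ induced by a permutation of the block positions. Such a reordering leaves the collection of matrices $\{\bm{K}_s : s\neq i\}$ unchanged, merely relabelling their positions. Because the determinant is an intrinsic invariant of a matrix, independent of where that matrix appears in a product, and because multiplication of scalars is commutative, the value $\prod_{s\neq i}\bigl|\det(\bm{K}_s)\bigr|^{n}$ is unaffected by the permutation. Hence $\mathrm{BS}_i$ is identical before and after any such reordering, which is exactly the claim.

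For emphasis I would also give the quotient form of the argument. By multiplicativity of the determinant, $|\det(\bm{K})| = \prod_s |\det(\bm{K}_s)|$ regardless of the order of the factors, so whenever $\det(\bm{K}_i)\neq 0$ one has
\begin{equation*}
\mathrm{BS}_i = \frac{|\det(\bm{K})|^{n}}{|\det(\bm{K}_i)|^{n}}.
\end{equation*}
This displays $\mathrm{BS}_i$ as a function of only the global determinant $|\det(\bm{K})|$, which is itself position-independent, and the determinant of the fixed matrix $\bm{K}_i$; neither quantity references a position index, so the conclusion is immediate.

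The argument involves no real computation, so there is no analytic obstacle; the one point I would treat carefully is the logical formalization. The statement must be read as invariance under permutations of the composition \emph{order} while the set of Koopman matrices and the identity of the block under scrutiny are held fixed. It is worth flagging explicitly that although the determinants entering $\mathrm{BS}_i$ are permutation-invariant, the product matrix $\bm{K}$ itself generally is \emph{not}, since matrix multiplication does not commute. Stating this distinction cleanly is the only subtlety.
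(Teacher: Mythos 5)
Your proposal is correct and takes essentially the same route as the paper, whose entire proof is the one-liner that the claim follows immediately from Eqn.~\eqref{FJacobian}: since $\left|\frac{\partial\bm{K}}{\partial\bm{K}_i}\right| = \prod_{s\neq i}\left|\bm{K}_s\right|^{n}$ depends only on the determinants of the other blocks and not on any position index, the sensitivity is position-invariant. Your explicit formalization of permutation invariance, the quotient form, and the caveat that $\bm{K}$ itself is not permutation-invariant are careful elaborations of the same observation rather than a different argument.
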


\begin{proof}
From Eqn.\eqref{FJacobian}, it obviously holds.
\end{proof}

The larger the sensitivity of a block, the more it is affected by other blocks. Further, the sensitivity can be utilized to represent the credit, as in BP process, they are inversely proportional. According to Eqn.\eqref{FJacobian}, we have $\frac{{\rm{BS}}_{i}}{{\rm{BS}}_{j}} \propto {\rm{abs}} \left(\frac{|\bm{K}_{j}|}{|\bm{K}_{i}|}\right) $, and the ratio of their learning credits is proportional to $\abs \left(\frac{|\bm{K}_{i}|}{|\bm{K}_{j}|}\right) $. Therefore, the learning credit of block $\bl_i$ is proportional to ${\rm{abs}}\left(|\bm{K}_i|\right)$. Therefore, the credit of a network block can be defined as follows.

\paragraph{Block Credit (BC)} The \emph{block credit}  ${\rm{BC}}_i$ of block  $\bl_i$ can be defined by the absolute value of the determinant  of corresponding Koopman operator, namely, ${\rm BC}_i \propto{{\rm{abs}}\left(|\bm{K}_i|\right)}$.

\paragraph{Dealing with non-square matrices.} Now, we generalize square matrices in Theorem \ref{the bs} to non-square ones. Recall that the absolute value of the Jacobian determinant presents the transformation scale ratio of an unit volume before and after transformation, and the determinant of non-square matrices should remain the same meaning. For a non-square matrix $\bm{M}\in\bbR^{m, n}$, from standard theory of linear algebra, we have
 \begin{equation*}
\abs(\left | \bm{M} \right |) =
\begin{cases}
   \sqrt{\left | \bm{M}^{\rm T}\bm{M} \right |},   &  \text{$m>n$} \\
    \sqrt{\left | \bm{M}\bm{M}^{\rm T} \right |},   &  \text{$m<n$}
\end{cases}.
\end{equation*}
It satisfies the geometrical requirement with detailed proof in \cite{gover2010determinants}.

\paragraph{Remark.}In fact, it also enlightens us a quick algorithm, we may carry out the PCA reduction (principal components analysis) method \cite{wold1987principal}  to extract the most important transformation features with the same dimension to characterize the credit, avoiding the dimension alignment and non-square cases. Notably, in some cases, the determinant may be 0. According to the analysis in preliminaries, this indicates that information loss occurs in some dimensions. We use the product of non-zero eigenvalues to describe the volume ratio before and after transformation, retaining the same algebraic explanation.

\paragraph{Another view on $\bm{K}$.} From a global perspective, we take the whole transformation between the input $\bm{x}$ and output $\bm{y}$ of network $\nn$, i.e, $\bm{K}$,  into consideration. Starting with 
$$    \bm{y = K x}, $$
w.l.o.g., assume that the shapes of $\bm{x} \in \bbR^{n}$, $\bm{y}\in \bbR^{m}$ and $\bm{K}\in \bbR^{m,n}$, then
$$ \bm{y}^i = \bm{K}^i \bm{x} =  \sum\nolimits_{j=1}^{n} \bm{K}^{ij} \bm{x}^j $$
sheds light on feature importance of input vector $\bm{x}$. For input $\bm{x}$, the importance (i.e., credit) of feature $\bm{x}^j$ to $\bm{y}^i$ is the combinational weight $\bm{K}^{ij}$. What's more, the above properties can also be found for each $\bm{K}_i$, which further reveals the credit of certain neurons.

\begin{remark}
Backward propagation seems to be a feasible approach to the CAP for trained neural networks.  Nevertheless, there are some reasons from two main aspects that result in its inapplicability. On one hand, the centre  of BP process is the chain rule, that is to say, the sensitivity is only related to the part between the current layer and the output layer, and it is irrelevant to other components. On the other hand, the derivatives of BP describe the sensitivity of some hyper-parameter to the final output loss, not the local block capability to the whole network function. Therefore, it is inapplicable on this problem and we do not do comparison with the BP algorithm in what follows.
\end{remark}

\begin{figure*}[htbp]
	\centering
	\subfigure[Credit assignment of layers and convolution kernels in network Lenet-5. \quad ]{
		\begin{minipage}[b]{0.6\linewidth}
			\includegraphics[width=4in]{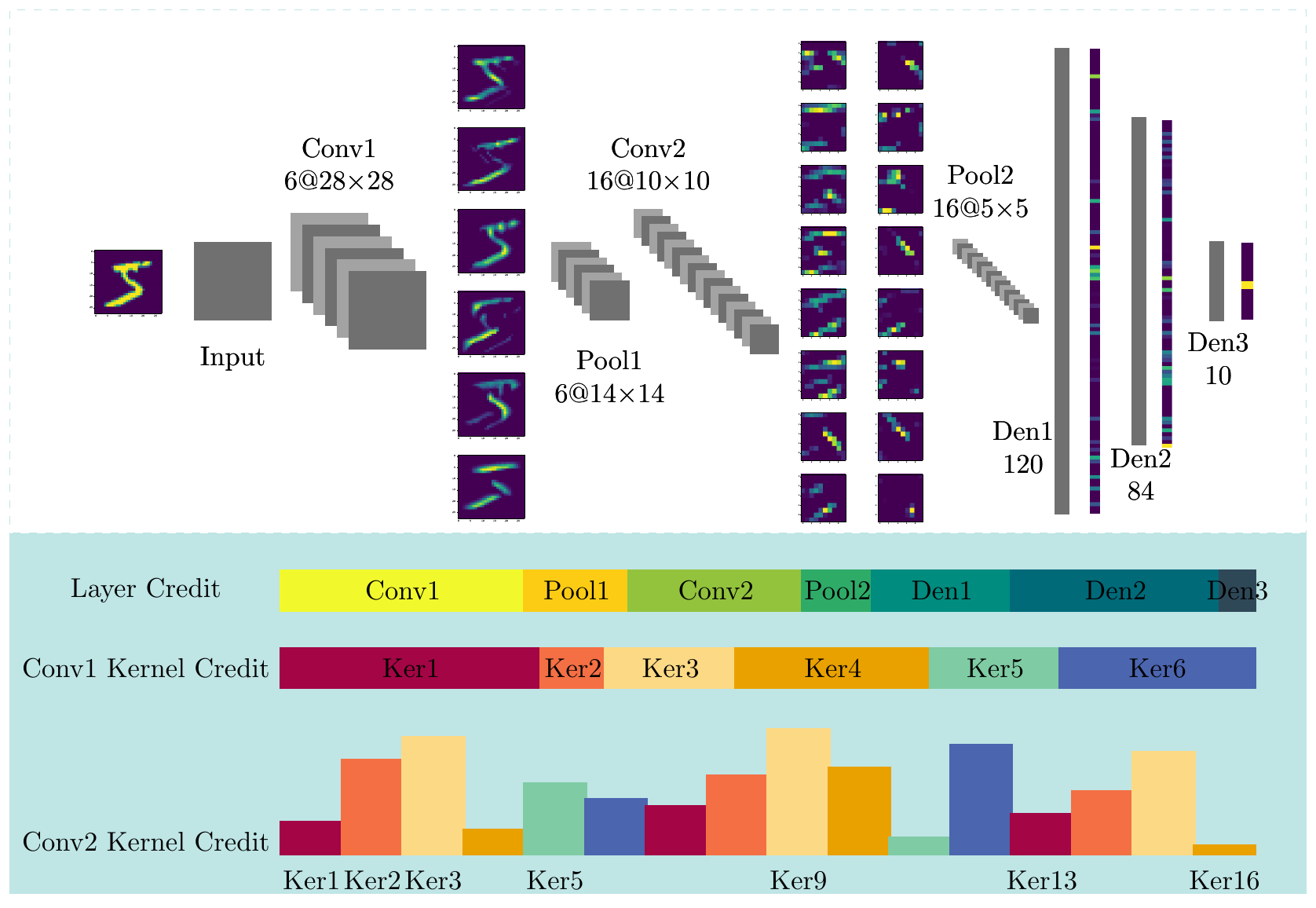}
			\label{fig:lenet}
		\end{minipage}
	} 
	\subfigure[Heat maps of feature weight.]{
		\begin{minipage}[b]{0.36\linewidth}
			\includegraphics[width=1.2in]{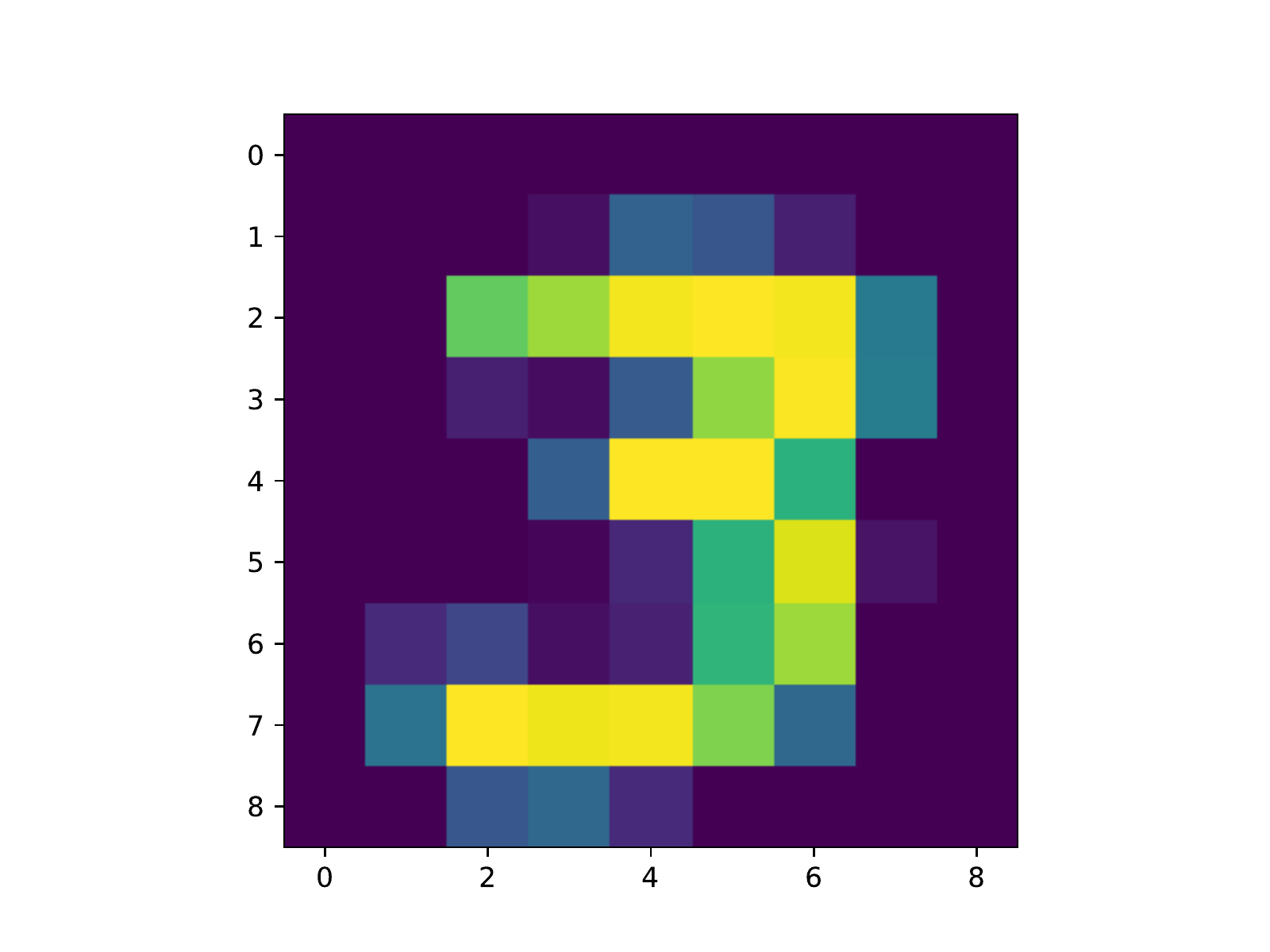} 
			\includegraphics[width=1.2in]{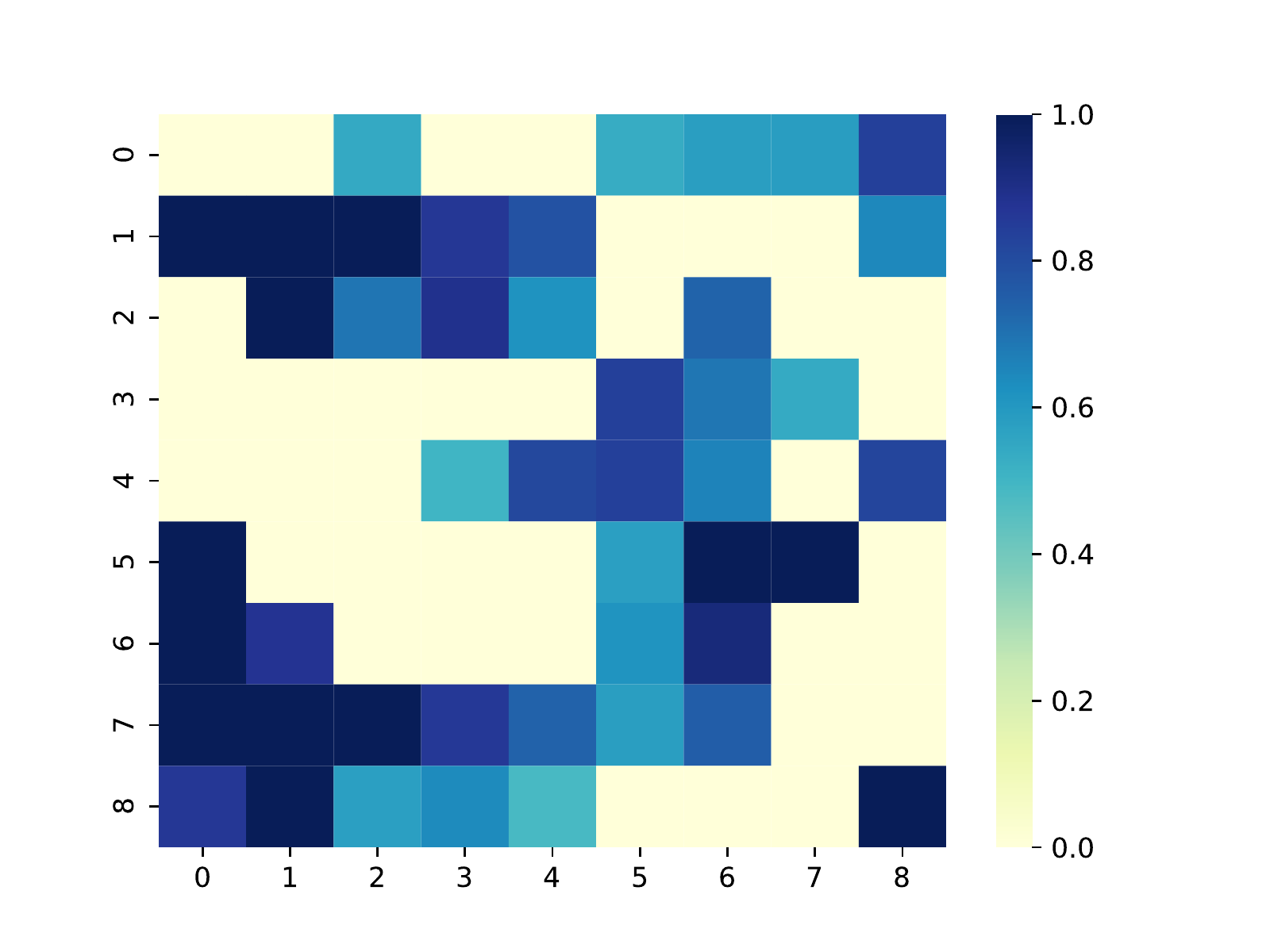}\\
			\includegraphics[width=1.2in]{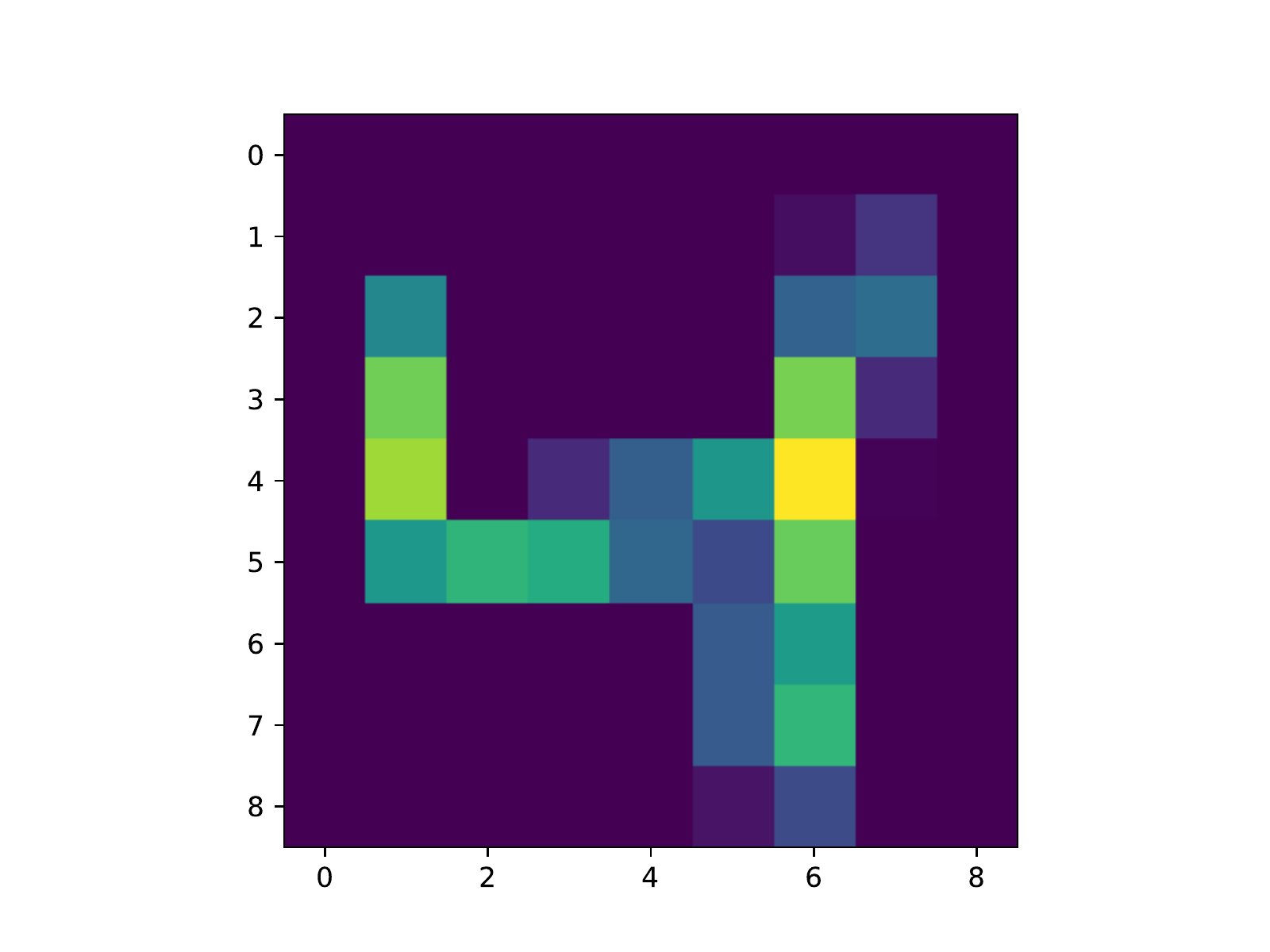}
			\includegraphics[width=1.2in]{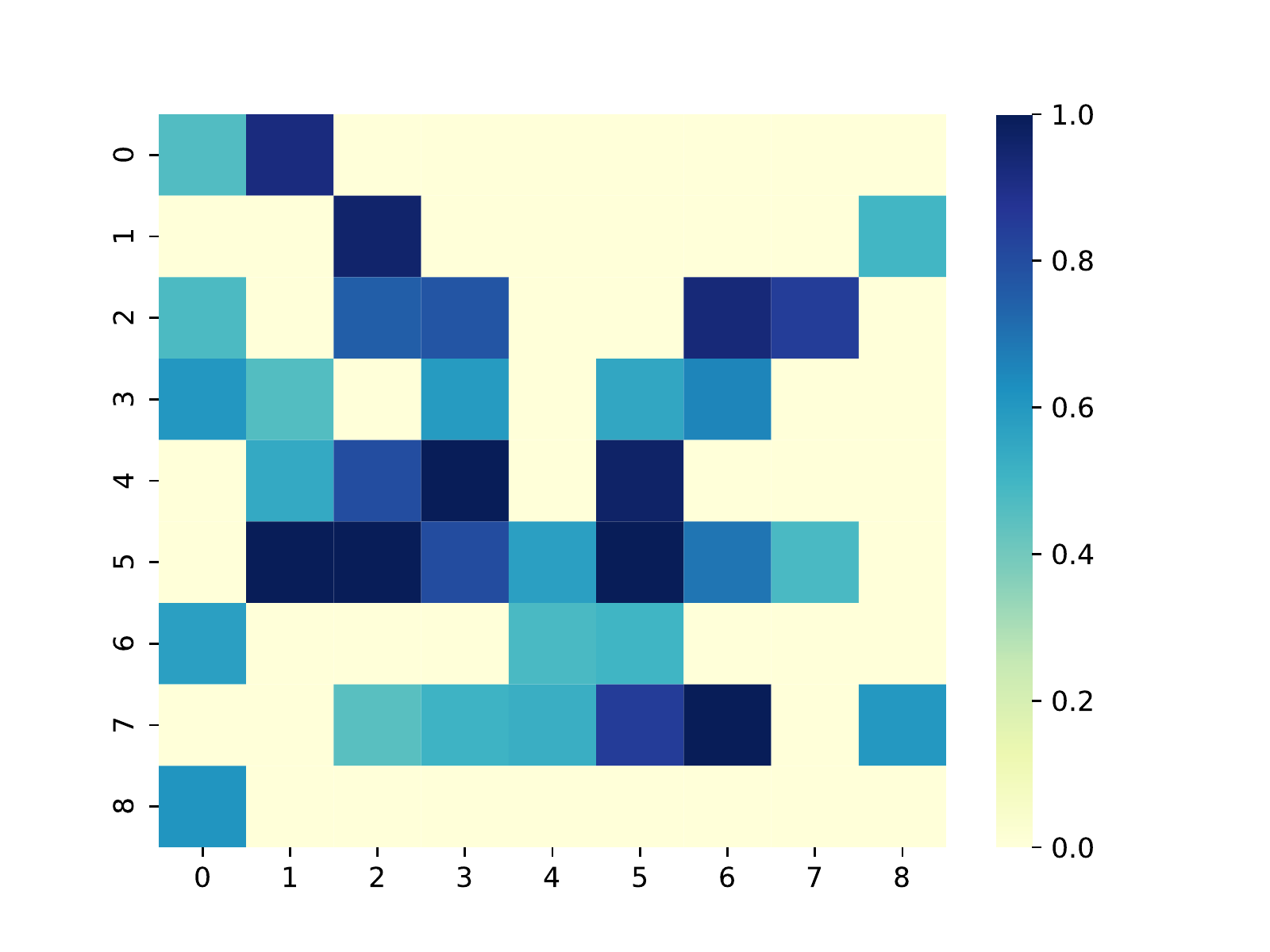}\\
			\includegraphics[width=1.2in]{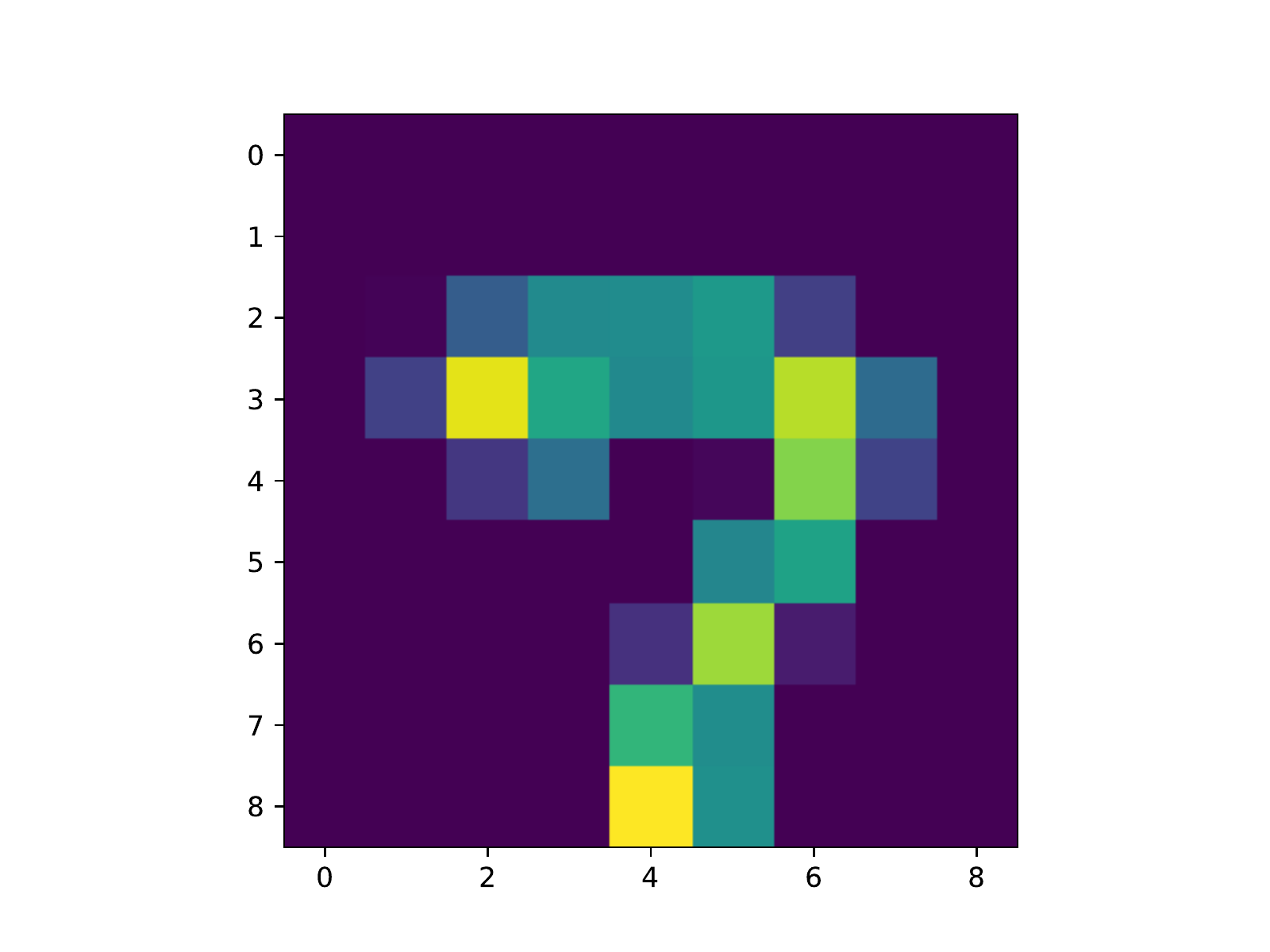}
			\includegraphics[width=1.2in]{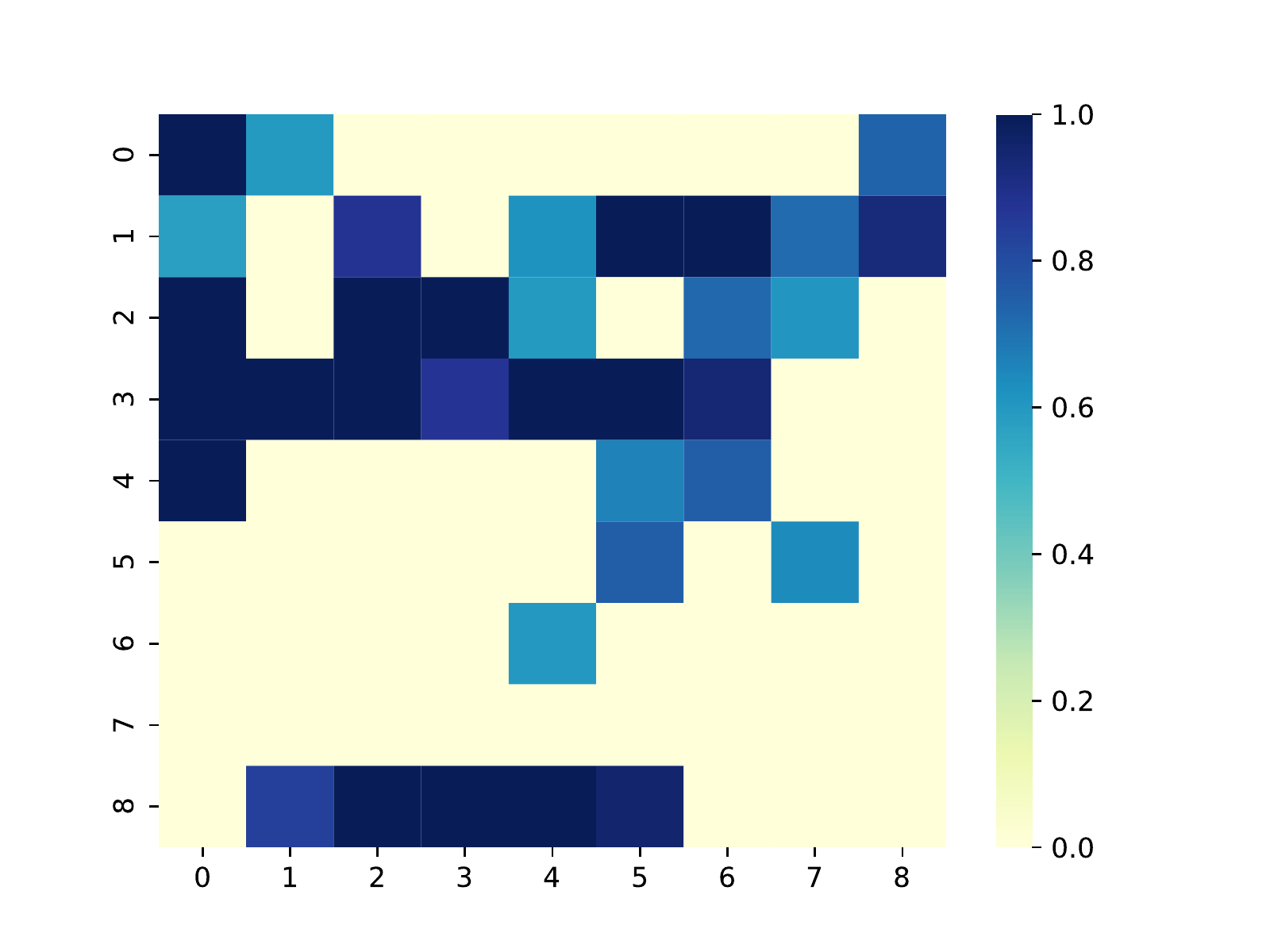}
		\label{fig:fea map}
		\end{minipage}
	}
	\caption {Illustrations on credit assignment and feature weight.}
\end{figure*}

\section{IV Experimental Demonstration}
\label{sec:exp}

In this section, we demonstrate the interpretability and effectiveness  of proposed method via experiments on MNIST dataset\footnote{\url{http://yann.lecun.com/exdb/mnist/}} and frequently used neural networks. All the experiments are run on the platform with 11 th Gen Intel(R) Core(TM) i7-11800H @ 2.3GHZ with RAM 16.0 GB.

\subsection{A. Component Credit Assignment.}
The neural network $\nn_1$ utilized in this subsection   is the  Lenet-5 neural network proposed in \cite{726791}, which is especially designed for handwritten numeral recognition of MNIST dataset in early years. The reason we do credit assignment on such a neural network is that it contains typical network layers: convolution, pooling and dense layers, whose architecture is shown in Fig. \ref{fig:lenet} (in gray color of the top part, also labelled with layer names and sizes). 

First, take an input image `5' as an illustration, heat maps displayed in Fig. \ref{fig:lenet} is the extracted feature maps from the adjacent network layers on the left and with the same sizes. Herein, we take each layer of Lenet-5 network as a block to be assigned credit. It can be observed that the feature maps of first convolution layer is  distinguishable for human, however, the following feature maps are difficult to recognize, let alone evaluating the layer credit according to them. 

Our credit assignment of each network layer and the convolution kernels of the two convolution layer (\textit{Conv1} and \textit{Conv2}) are shown at the bottom of Fig. \ref{fig:lenet}, labelled as \textit{Layer Credit}, \textit{Conv1 Kernel Credit} and \textit{Conv2 Kernel Credit} on the bottom of Fig. \ref{fig:lenet}. Notably, to obtain more exact Koopman operators for each block, the experiments are run 10 times and the mean matrix of  the result matrices are utilized for credit assignment. For each iteration, the matrix for dimension alignment is generated randomly with a check ensuring it satisfies the rank requirement, as declared in Theorem \ref{the align}. Compared with the feature maps, which are hard to understand as the number of network layers increases, our credit assignment method is more straightforward and rigorous. Taking the layer credit as an example, the result shows that convolution layers deserve more credit than dense layers and pooling layers receive least credit, which is coherent with human's perception and the design purpose of the network. Besides, the length of a bar in Fig. \ref{fig:lenet} represents the contribution of a component, whereas, it is only an abstract illustration  after processing the actual data and some closed values are assigned with the same credit rank.

\subsection{B. Feature Weight Assignment.}
Maybe the kernel credit assignment of convolution layers cannot be straightly confirmed by the confusing heat maps. We also have conducted an experiment to show the feature weight  from  $\bm{K}$ described before to further demonstrate the effectiveness of our credit assignment method.  Neural network $\nn_2$ used in this part is a fully-connected neural network instance, also designed and trained with MNIST dataset. It contains $5$ layers, with width $81$, $128$, $64$, $24$ and $10$ respectively, and for presentation clarity, we pool the input size from $28\times 28$ into $9\times 9$ with max pooling. Also, each layer is treated as a network block here.

In Fig. \ref{fig:fea map}, we show the input sample on the left column and the weights of each pixel utilized to do the classification decision in $\nn_2$ are displayed on the right column. The result is consistent with human recognition and it can be seen that the main feature pixels are captured by the weight, showing the validity of the linear approximation, rendering the subsequent credit assignment convincing to great extent. Similarly as above, the weight matrix also is obtained from 10 independent tests with randomly generated matrices for dimension alignment. 

\section{V Discussion}
\label{sec:discuss}

In this paper, we migrate the credit assignment problem to trained  artificial neural networks and take an alternative perspective of linear dynamics on tackling this problem. We regard an ANN as a dynamical system,  composed of some sequential sub-dynamics (components). To characterize the evolution mechanism of each block more comprehensively, we utilize the step-delay embedding to capture snapshots of each subsystem. More importantly,  during the embedding process, the minimal linear dimension alignment technique is put forward to resolve the dimension-difference problem between input and output layers of each block, which would inspire further insight into considering neural networks as dynamics. Furthermore, an algebraic credit metric is derived to evaluate the contribution of each component and  experiments on representative neural networks demonstrate the effectiveness of the proposed methods.

From the linear dynamic perspective,  our approach would provide convenience and guidance to (minimal) network patch, specification and verification, for its credit assignment on specific network layers or modules. Therefore, it is also necessary to do more validity test with practical application domains.  Meanwhile, the design and evaluation of more meaningful metrics for network components is also a promising future work.

\bibliography{aaai23}

\end{document}